\newtheorem{lemma}{Lemma}
\newcolumntype{f}{>{\centering\arraybackslash}X}
\newcolumntype{h}{>{\hsize=.5\hsize\centering\arraybackslash}X}
\newcommand{\example}{\mathbf{x}}
\newcommand{\advexample}{\widetilde{\mathbf{x}}}
\newcommand{\clf}{f}
\newcommand{\featclf}{g}
\newcommand{\feat}{\phi}
\newcommand{\featspace}{\mathcal{A}}
\newcommand{\pdist}{d}
\definecolor{LightYellow}{rgb}{1,1,0.85}
\newcommand{\smallparagraph}[1]{\noindent\textbf{#1}\quad}
\newcommand{\bv}{\mathbf{v}}
\newcommand{\bu}{\mathbf{u}}
\def\<{\langle}
\def\>{\rangle}
\title{Perceptual Adversarial Robustness:\\ Defense Against Unseen Threat Models}
\author{%
  Cassidy Laidlaw \\
  University of Maryland \\
  \texttt{claidlaw@umd.edu} \\
  \And
  Sahil Singla \\
  University of Maryland \\
  \texttt{ssingla@cs.umd.edu} \\
  \And
  Soheil Feizi \\
  University of Maryland \\
  \texttt{sfeizi@cs.umd.edu} \\
}
\begin{document}

\maketitle

\begin{abstract}
A key challenge in adversarial robustness is the lack of a precise mathematical characterization of human perception, used in the {\it definition} of adversarial attacks that are imperceptible to human eyes. Most current attacks and defenses try to avoid this issue by considering restrictive adversarial threat models such as those bounded by $L_2$ or $L_\infty$ distance, spatial perturbations, etc. However, models that are robust against any of these restrictive threat models are still fragile against other threat models, i.e. they have poor generalization to unforeseen attacks. Moreover, even if a model is robust against the union of several restrictive threat models, it is still susceptible to other imperceptible adversarial examples that are not contained in any of the constituent threat models. To resolve these issues, we propose adversarial training against the set of all imperceptible adversarial examples. Since this set is intractable to compute without a human in the loop, we approximate it using deep neural networks. We call this threat model the {\it neural perceptual threat model} (NPTM); it includes adversarial examples with a bounded {\it neural perceptual distance} (a neural network-based approximation of the true perceptual distance) to natural images. Through an extensive perceptual study, we show that the neural perceptual distance correlates well with human judgements of perceptibility of adversarial examples, validating our threat model. Under the NPTM, we develop novel perceptual adversarial attacks and defenses. Because the NPTM is very broad, we find that Perceptual Adversarial Training (PAT) against a perceptual attack gives robustness against many other types of adversarial attacks. We test PAT on CIFAR-10 and ImageNet-100 against five diverse adversarial attacks: $L_2$, $L_\infty$, spatial, recoloring, and JPEG. We find that PAT achieves state-of-the-art robustness against the union of these five attacks—more than doubling the accuracy over the next best model—{\it without} training against any of them. That is, PAT generalizes well to unforeseen perturbation types. This is vital in sensitive applications where a particular threat model cannot be assumed, and to the best of our knowledge, PAT is the first adversarial training defense with this property.

\end{abstract}

\section{Introduction}

Many modern machine learning algorithms are susceptible to adversarial examples: carefully crafted inputs designed to fool models into giving incorrect outputs~\citep{biggio_evasion_2013,szegedy_intriguing_2014,kurakin_adversarial_2016,xie_adversarial_2017}. Much research has focused on increasing classifiers’ robustness against adversarial attacks \citep{goodfellow_explaining_2015,madry_towards_2018,zhang_theoretically_2019}.
However, existing adversarial defenses for image classifiers generally consider simple threat models. An adversarial threat model defines a set of perturbations that may be made to an image in order to produce an adversarial example. Common threat models include $L_2$ and $L_\infty$ threat models, which constrain adversarial examples to be close to the original image in $L_2$ or $L_\infty$ distances. Some work has proposed additional threat models which allow spatial perturbations~\citep{engstrom_rotation_2017,wong_wasserstein_2019,xiao_spatially_2018}, recoloring~\citep{hosseini_semantic_2018,laidlaw_functional_2019,bhattad_big_2019}, and other modifications~\citep{song_constructing_2018,zeng_adversarial_2019} of an image.

There are multiple issues with these unrealistically constrained adversarial threat models. First, hardening against one threat model assumes that an adversary will only attempt attacks within that threat model. Although a classifier may be trained to be robust against $L_\infty$ attacks, for instance, an attacker could easily generate a spatial attack to fool the classifier. One possible solution is to train against multiple threat models simultaneously \citep{jordan2019quantifying,laidlaw_functional_2019,maini_adversarial_2019,tramer_adversarial_2019}. However, this generally results in a lower robustness against any one of the threat models when compared to hardening against that threat model alone. Furthermore, not all possible threat models may be known at training time, and adversarial defenses do not usually generalize well to unforeseen threat models \citep{kang_testing_2019}.

The ideal solution to these drawbacks would be a defense that is robust against a wide, unconstrained threat model. We differentiate between two such threat models. The \textit{unrestricted adversarial threat model} \citep{unrestricted_advex_2018} encompasses any adversarial example that is labeled as one class by a classifier but a different class by humans. On the other hand, we define the \textit{perceptual adversarial threat model} as including all perturbations of natural images that are imperceptible to a human. Most existing narrow threat models such as $L_2$, $L_{\infty}$, etc. are near subsets of the perceptual threat model (Figure \ref{fig:threat_models}). Some other threat models, such as adversarial patch attacks \citep{brown_adversarial_2018}, may perceptibly alter an image without changing its true class and as such are contained in the unrestricted adversarial threat model. In this work, we focus on the perceptual threat model.

The perceptual threat model can be formalized given the true perceptual distance $\pdist^*(\example_1, \example_2)$ between images $\example_1$ and $\example_2$, defined as how different two images appear to humans. For some threshold $\epsilon^*$, which we call the perceptibility threshold, images $\example$ and $\example'$ are indistinguishable from one another as long as $\pdist^*(\example, \example') \leq \epsilon^*$. Note that in general $\epsilon^*$ may depend on the specific input. Then, the perceptual threat model for a natural input $\example$ includes all adversarial examples $\advexample$ which cause misclassification but are imperceptibly different from $\example$, i.e. $\pdist^*(\example, \advexample) \leq \epsilon^*$.

The true perceptual distance $\pdist^*(\cdot, \cdot)$, however, cannot be easily computed or optimized against. To solve this issue, we propose to use a {\it neural perceptual distance}, an approximation of the true perceptual distance between images using neural networks. Fortunately, there have been many surrogate perceptual distances proposed in the computer vision literature such as SSIM \citep{wang_image_2004}. Recently, \citet{zhang_unreasonable_2018} discovered that comparing the internal activations of a convolutional neural network when two different images are passed through provides a measure, Learned Perceptual Image Patch Similarity (LPIPS), that correlates well with human perception. We propose to use the LPIPS distance $\pdist(\cdot, \cdot)$ in place of the true perceptual distance $\pdist^*(\cdot, \cdot)$ to formalize the {\it neural perceptual threat model} (NPTM).

\begin{figure}[t]
\centering
\adjustbox{valign=t}{\begin{minipage}{.48\textwidth}
    \centering
    \input{images/threat_models.pgf}
    \vspace{-6pt}
    \caption{Relationships between various adversarial threat models. $L_p$ and spatial adversarial attacks are nearly contained within the perceptual threat model, while patch attacks may be perceptible and thus are not contained. In this paper, we propose a {\it neural perceptual threat model} (NPTM) that is based on an approximation of the true perceptual distance using neural networks.}
    \label{fig:threat_models}
\end{minipage}}
\begin{minipage}[t]{.02\textwidth}
$\,$
\end{minipage}
\adjustbox{valign=t}{\begin{minipage}{.48\textwidth}
    \centering
    \input{images/threat_model_overlap.pgf}
    \vspace{-6pt}
    \caption{Area-proportional Venn diagram validating our threat model from Figure 1. Each ellipse indicates a set of vulnerable ImageNet-100 examples to one of three attacks: $L_2$, StAdv spatial \citep{xiao_spatially_2018}, and our neural perceptual attack (LPA, Section \ref{sec:attacks}). Percentages indicate the proportion of test examples successfully attacked. Remarkably, the NPTM encompasses both other types of attacks and includes additional examples not vulnerable to either.}
    \label{fig:threat_model_overlap}
\end{minipage}}
\end{figure}



We present adversarial attacks and defenses for the proposed NPTM. Generating adversarial examples bounded by the neural perceptual distance is difficult compared to generating $L_p$ adversarial examples because of the complexity and non-convexness of the constraint. However, we develop two attacks for the NPTM, {\bf P}erceptual {\bf P}rojected {\bf G}radient {\bf D}escent ({\bf PPGD}) and \textbf{L}agrangian \textbf{P}erceptual \textbf{A}ttack (\textbf{LPA}) (see Section \ref{sec:attacks} for details). We find that LPA is by far the strongest adversarial attack at a given level of perceptibility (see Figure \ref{fig:perceptual_study}), reducing the most robust classifier studied to only 2.4\% accuracy on ImageNet-100 (a subset of ImageNet) while remaining imperceptible. LPA also finds adversarial examples outside of any of the other threat models studied (see Figure \ref{fig:threat_model_overlap}). Thus, even if a model is robust to many narrow threat models ($L_p$, spatial, etc.), LPA can still cause serious errors.



In addition to these attacks, which are suitable for evaluation of a classifier against the NPTM, we also develop Fast-LPA, a more efficient version of LPA that we use in {\bf P}erceptual {\bf A}dversarial {\bf T}raining ({\bf PAT}). Remarkably, using PAT to train a neural network classifier produces a single model with high robustness against a variety of imperceptible perturbations, including $L_\infty$, $L_2$, spatial, recoloring, and JPEG attacks, on CIFAR-10 and ImageNet-100 (Tables \ref{tab:cifar_at} and \ref{tab:imagenet100_at}). For example, PAT on ImageNet-100 gives 32.5\% accuracy against the {\it union} of these five attacks, whereas $L_\infty$ and $L_2$ adversarial training give 0.5\% and 12.3\% accuracy, respectively (Table \ref{tab:intro_acc}). PAT achieves more than \textit{double} the accuracy against this union of five threat models despite not explicitly training against any of them. Thus, it generalizes well to unseen threat models.




\begin{wraptable}{r}{4cm}
    \vspace{-12pt}
    \caption{ImageNet-100 accuracies against the union of five threat models ($L_\infty$, $L_2$, JPEG, StAdv, and ReColorAdv) for different adversarially trained (AT) classifiers and a single model trained using PAT.}
    \label{tab:intro_acc}
    \begin{tabular}{lr}
        \toprule  
        \bf Training & \bf Accuracy \\
        \midrule
        PGD $L_\infty$ & 0.5\% \\
        PGD $L_2$ & 12.3\% \\
        \bf PAT (ours) & \bf 32.5\% \\
        \bottomrule
    \end{tabular}
    \vspace{-24pt}
\end{wraptable}

Does the LPIPS distance accurately reflect human perception when it is used to evaluate adversarial examples? We performed a study on Amazon Mechanical Turk (AMT) to determine how perceptible 7 different types of adversarial perturbations such as $L_\infty$, $L_2$, spatial, and recoloring attacks are at multiple threat-specific bounds. We find that LPIPS correlates well with human judgements across all the different adversarial perturbation types we examine. This indicates that the NPTM closely matches the true perceptual threat model and reinforces the utility of our perceptual attacks to measure adversarial robustness against an expansive threat model. Furthermore, this study allows calibration of a variety of attack bounds to a single perceptibility metric. We have released our dataset of adversarial examples along with the annotations made by participants for further study\footnote{\label{fn:code_data}Code and data can be downloaded at {\scriptsize\url{https://github.com/cassidylaidlaw/perceptual-advex}}.}.

\section{Related Work}
\label{sec:related_work}

\smallparagraph{Adversarial robustness}
Adversarial robustness has been studied extensively for $L_2$ or $L_\infty$ threat models  \citep{goodfellow_explaining_2015,carlini_towards_2017,madry_towards_2018} and non-$L_p$ threat models such as spatial perturbations \citep{engstrom_rotation_2017,xiao_spatially_2018,wong_wasserstein_2019}, recoloring of an image \citep{hosseini_semantic_2018,laidlaw_functional_2019,bhattad_big_2019}, and perturbations in the frequency domain \citep{kang_testing_2019}. The most popular known adversarial defense for these threat models is adversarial training \cite{Kurakin2016AdversarialML, madry_towards_2018, zhang_theoretically_2019} where a neural network is trained to minimize the worst-case loss in a region around the input. Recent evaluation methodologies such as Unforeseen Attack Robustness (UAR) \citep{kang_testing_2019} and the Unrestricted Adversarial Examples challenge \citep{unrestricted_advex_2018} have raised the problem of finding an adversarial defense which gives good robustness under more general threat models. \citet{sharif_suitability_2018} conduct a perceptual study showing that $L_p$ threat models are a poor approximation of the perceptual threat model. \citet{dunn_semantic_2020} and \citet{xu_towards_2020} have developed adversarial attacks that manipulate higher-level, semantic features. \citet{jin_manifold_2020} train with a manifold regularization term, which gives some robustness to unseen perturbation types. \citet{stutz_confidence-calibrated_2020} also propose a method which gives robustness against unseen perturbation types, but requires rejecting (abstaining on) some inputs.

\smallparagraph{Perceptual similarity}
Two basic similarity measures for images are the $L_2$ distance and the Peak Signal-to-Noise Ratio (PSNR). However, these similarity measures disagree with human vision on perturbations such as blurring and spatial transformations, which has motivated others including SSIM \citep{wang_image_2004}, MS-SSIM \citep{wang_multiscale_2003}, CW-SSIM \citep{sampat_complex_2009}, HDR-VDP-2 \citep{mantiuk_hdr-vdp-2_2011} and LPIPS \citep{zhang_unreasonable_2018}. MAD competition \citep{wang_maximum_2008} uses a constrained optimization technique related to our attacks to evaluate perceptual measures.

\smallparagraph{Perceptual adversarial robustness} Although LPIPS was previously proposed, it has mostly been used for development and evaluation of generative models \citep{huang_multimodal_2018,karras_style-based_2019}. \citet{jordan2019quantifying} first explored quantifying adversarial distortions with LPIPS distance. However, to the best of our knowledge, we are the first to apply a more accurate perceptual distance to the problem of improving adversarial robustness. As we show, adversarial defenses based on $L_2$ or $L_\infty$ attacks are unable to generalize to a more diverse threat model. Our method, PAT, is the first adversarial training method we know of that can generalize to unforeseen threat models without rejecting inputs.

\section{Neural Perceptual Threat Model (NPTM)}
\label{sec:threat_model}


Since the true perceptual distance between images cannot be efficiently computed, we use approximations of it based on neural networks, i.e. neural perceptual distances. In this paper, we focus on the LPIPS distance \citep{zhang_unreasonable_2018} while we note that other neural perceptual distances can also be used in our attacks and defenses.

Let $\featclf: \mathcal{X} \rightarrow \mathcal{Y}$ be a convolutional image classifier network defined on images $\example \in \mathcal{X}$. Let $\featclf$ have $L$ layers, and let the internal activations (outputs) of the $l$-th layer of $g(\example)$ for an input $\example$ be denoted as $\featclf_l(\example)$. \citet{zhang_unreasonable_2018} have found that normalizing and then comparing the internal activations of convolutional neural networks correlates well with human similarity judgements. Thus, the first step in calculating the LPIPS distance using the network $\featclf(\cdot)$ is to normalize the internal activations across the channel dimension such that the $L_2$ norm over channels at each pixel is one. Let $\hat{\featclf}_l(\example)$ denote these channel-normalized activations at the $l$-th layer of the network. Next, the activations are normalized again by layer size and flattened into a single vector $\feat(\example) \triangleq \left( \frac{\hat{\featclf}_1(\example)}{\sqrt{w_1 h_1}}, \dots, \frac{\hat{\featclf}_L(\example)}{\sqrt{w_L h_L}} \right)$
where $w_l$ and $h_l$ are the width and height of the activations in layer $l$, respectively. The function $\feat: \mathcal{X} \rightarrow \mathcal{A}$ thus maps the inputs $\example \in \mathcal{X}$ of the classifier $\featclf(\cdot)$ to the resulting normalized, flattened internal activations $\feat(\example) \in \mathcal{A}$, where $\mathcal{A} \subseteq \mathbb{R}^m$ refers to the space of all possible resulting activations. The LPIPS distance $\pdist(\example_1, \example_2)$ between images $\example_1$ and $\example_2$ is then defined as:
\begin{equation}
    \label{eq:lpips_distance}
    \pdist(\example_1, \example_2) \triangleq \left\| \feat(\example_1) - \feat(\example_2) \right\|_2.
\end{equation}
In the original LPIPS implementation, \citet{zhang_unreasonable_2018} learn weights to apply to the normalized activations based on a dataset of human perceptual judgements. However, they find that LPIPS is a good surrogate for human vision even without the additional learned weights; this is the version we use since it avoids the need to collect such a dataset.

Now let $\clf: \mathcal{X} \rightarrow \mathcal{Y}$ be a classifier which maps inputs $\example \in \mathcal{X}$ to labels $\clf(\example) \in \mathcal{Y}$. $\clf(\cdot)$ could be the same as $\featclf(\cdot)$, or it could be a different network; we experiment with both. For a given natural input $\example$ with the true label $y$, a neural perceptual adversarial example with a perceptibility bound $\epsilon$ is an input $\advexample \in \mathcal{X}$ such that $\advexample$ must be perceptually similar to $\example$ but cause $\clf$ to misclassify:
\begin{equation}
    \label{eq:perceptual_adversarial_example}
    \clf(\advexample) \neq y
    \qquad \text{and} \qquad
    \pdist(\example, \advexample) = \left\| \feat(\example) - \feat(\advexample) \right\|_2 \leq \epsilon.
\end{equation}



\begin{figure}
    \centering
    \setlength{\tabcolsep}{0pt}
    \begin{tabularx}{\textwidth}{@{\hskip -0.04in}hff@{\hskip 0.07in}hff@{\hskip -0.04in}}
        Original & Self-bd. LPA & External-bd. LPA & Original & Self-bd. LPA & External-bd. LPA
    \end{tabularx}
    \includegraphics[width=\textwidth]{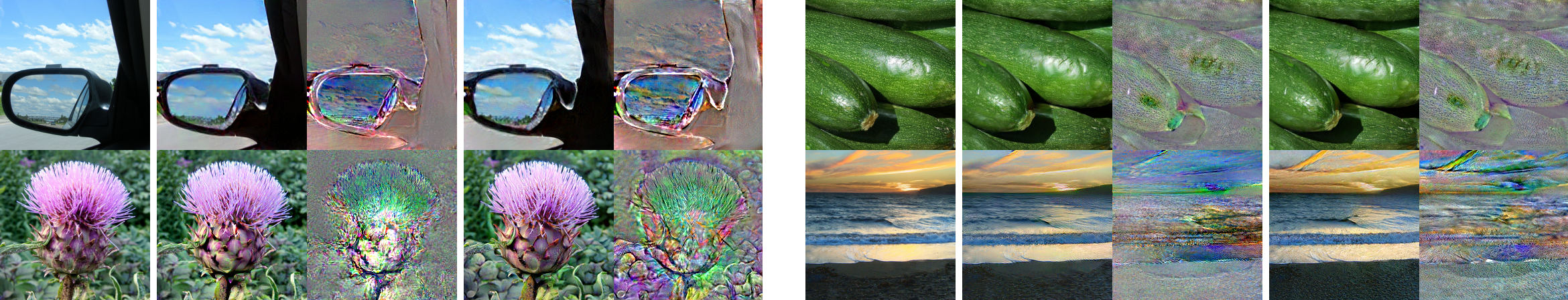}
    \vspace{-18pt}
    \caption{Adversarial examples generated using self-bounded and externally-bounded LPA perceptual adversarial attack (Section \ref{sec:attacks}) with a large bound. Original images are shown in the left column and magnified differences from the original are shown to the right of the examples. See also Figure \ref{fig:imagenet_examples}.}
    \label{fig:imagenet_examples_lpa}
\end{figure}

\section{Perceptual Adversarial Attacks}
\label{sec:attacks}
We propose attack methods which attempt to find an adversarial example with small perceptual distortion. Developing adversarial attacks that utilize the proposed neural perceptual threat model is more difficult than that of standard $L_p$ threat models, because the LPIPS distance constraint is more complex than $L_p$ constraints. In general, we find an adversarial example that satisfies (\ref{eq:perceptual_adversarial_example}) by maximizing a loss function $\mathcal{L}$ within the LPIPS bound. The loss function we use is similar to the margin loss from \citet{carlini_towards_2017}, defined as
$$
\mathcal{L}(\clf(\example), y) \triangleq \max_{i \neq y}\big( z_i(\example) - z_y(\example)\big),
$$
where $z_i(\example)$ is the $i$-th logit output of the classifier $\clf(\cdot)$. This gives the constrained optimization
\begin{equation}
    \label{eq:perceptual_adversarial_optim}
    \max_{\advexample} \quad
    \mathcal{L}(\clf(\advexample), y) \qquad
    \text{subject to} \quad
    \pdist(\example, \advexample) = \left\| \feat(\example) - \feat(\advexample) \right\|_2 \leq \epsilon.
\end{equation}
Note that in this attack problem, the classifier network $f(\cdot)$ and the LPIPS network $g(\cdot)$ are fixed. These two networks could be identical, in which case the same network that is being attacked is used to calculate the LPIPS distance that bounds the attack; we call this a \textit{self-bounded} attack. If a different network is used to calculate the LPIPS bound, we call it an \textit{externally-bounded} attack.

Based on this formulation, we propose two perceptual attack methods, Perceptual Projected Gradient Descent (PPGD) and Lagrangian Perceptual Attack (LPA). See Figures \ref{fig:imagenet_examples_lpa} and \ref{fig:imagenet_examples} for sample results.

\smallparagraph{Perceptual Projected Gradient Descent (PPGD)}
The first of our two attacks is analogous to the PGD \citep{madry_towards_2018} attacks used for $L_p$ threat models. In general, these attacks consist of iteratively performing two steps on the current adversarial example candidate: (a) taking a step of a certain size under the given distance that maximizes a first-order approximation of the misclassification loss, and (b) projecting back onto the feasible set of the threat model. 

Identifying the ideal first-order step is easy in $L_2$ and $L_\infty$ threat models; it is the gradient of the loss function and the sign of the gradient, respectively. However, computing this step is not straightforward with the LPIPS distance, because the distance metric itself is defined by a neural network. Following (\ref{eq:perceptual_adversarial_optim}), we desire to find a step $\delta$ to maximize $\mathcal{L}(\clf(\example + \delta), y)$ such that $\pdist(\example + \delta, \example) = \| \feat(\example + \delta) - \feat(\example) \|_2 \leq \eta$, where $\eta$ is the step size. Let $\hat{\clf}(\example) := \mathcal{L} (\clf(\example), y)$ for an input $\example \in \mathcal{X}$. Let $J$ be the Jacobian of $\feat(\cdot)$ at $\example$ and $\nabla \hat{\clf}$ be the gradient of $\hat{\clf}(\cdot)$ at $\example$. Then, we can approximate (\ref{eq:perceptual_adversarial_optim}) using a first-order Taylor's approximation of $\feat$ and $\hat{\clf}$ as follows:
\begin{equation}
    \label{eq:ppgd_step_optim}
    \max_{\delta} \quad \hat{\clf}(\example) + (\nabla \hat{\clf})^\top \delta \qquad \text{subject to} \quad \left \| J \delta \right \|_2 \leq \eta.
\end{equation}
We show that this constrained optimization can be solved in a closed form:
\begin{lemma}
    \label{thm:ppgd_step}
    Let $J^+$ denote the pseudoinverse of $J$. Then the solution to (\ref{eq:ppgd_step_optim}) is given by
    $$
    \delta^* = \eta \frac{(J^\top J)^{-1} (\nabla \hat{\clf})}
    {\| (J^+)^\top (\nabla \hat{\clf}) \|_2}.
    $$
\end{lemma}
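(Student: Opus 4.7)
The constant term $\hat{\clf}(\example)$ in the objective plays no role in the argmax, so the problem reduces to maximizing the linear functional $(\nabla \hat{\clf})^\top \delta$ subject to a quadratic constraint $\|J\delta\|_2 \le \eta$. This is a classical linearly-constrained-by-quadratic problem, and the optimum must lie on the boundary $\|J\delta\|_2 = \eta$ whenever $\nabla \hat{\clf} \neq 0$, since the objective is linear and unbounded in free directions.

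The plan is to apply Lagrange multipliers directly. Form
\begin{equation*}
    \mathcal{L}(\delta,\lambda) = (\nabla \hat{\clf})^\top \delta - \lambda\bigl(\delta^\top J^\top J \delta - \eta^2\bigr).
\end{equation*}
Stationarity $\nabla_\delta \mathcal{L} = 0$ gives $\nabla \hat{\clf} = 2\lambda\, J^\top J\, \delta$. Assuming $J$ has full column rank so that $J^\top J$ is invertible, one obtains $\delta = \tfrac{1}{2\lambda}(J^\top J)^{-1}\nabla \hat{\clf}$. Plugging this into the active constraint $\delta^\top J^\top J \delta = \eta^2$ and solving for $\lambda$ yields
\begin{equation*}
    \tfrac{1}{2\lambda} = \frac{\eta}{\sqrt{(\nabla \hat{\clf})^\top (J^\top J)^{-1}\nabla \hat{\clf}}}.
\end{equation*}
Choosing the positive root ensures the objective is maximized rather than minimized.

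The final step is a short algebraic simplification to match the stated form. Using $J^+ = (J^\top J)^{-1} J^\top$ (the Moore--Penrose pseudoinverse under the full-column-rank assumption), one checks the identity $J^+ (J^+)^\top = (J^\top J)^{-1} J^\top J (J^\top J)^{-1} = (J^\top J)^{-1}$. Therefore
\begin{equation*}
    \sqrt{(\nabla \hat{\clf})^\top (J^\top J)^{-1}\nabla \hat{\clf}} \;=\; \bigl\|(J^+)^\top \nabla \hat{\clf}\bigr\|_2,
\end{equation*}
and substituting back produces exactly the expression in the lemma.

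The main subtlety I would flag is the invertibility of $J^\top J$: the Jacobian of $\feat$ at $\example$ must have full column rank for the closed form to be well defined. In practice, for image inputs passed through a deep LPIPS network whose total intermediate-activation dimension $m$ exceeds the input dimension $n$, this is generically true, but the proof should state it as a standing hypothesis (or replace $(J^\top J)^{-1}$ with the minimum-norm pseudoinverse solution, in which case the same derivation goes through with $(J^\top J)^+$ and yields the identical formula after using $J^+(J^+)^\top = (J^\top J)^+$). A second minor check is the sign of $\lambda$ (positive, since we are maximizing and the constraint is active from the outside), which is what selects the formula given rather than its negative.
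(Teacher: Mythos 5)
Your proposal is correct and follows essentially the same route as the paper's own proof: Lagrange multipliers on the squared constraint, stationarity giving $\delta \propto (J^\top J)^{-1}\nabla\hat{\clf}$, and substitution into the active constraint to fix the multiplier, with the identity $(J^+)^\top = J(J^\top J)^{-1}$ closing the algebra. You are somewhat more careful than the paper in flagging the full-column-rank assumption needed for $(J^\top J)^{-1}$ to exist and the sign choice of the multiplier, but these are refinements of the same argument rather than a different approach.
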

See Appendix \ref{subsec:ppgd_details} for the proof. This solution is still difficult to efficiently compute, since calculating $J^+$ and inverting $J^\top J$ are computationally expensive. Thus, we approximately solve for $\delta^*$ using the conjugate gradient method; see Appendix \ref{subsec:ppgd_details} for details.

Perceptual PGD consists of repeatedly finding first-order optimal $\delta^*$ to add to the current adversarial example $\advexample$ for a number of steps. Following each step, if the current adversarial example $\advexample$ is outside the LPIPS bound, we project $\advexample$ back onto the threat model such that $\pdist(\advexample, \example) \leq \epsilon$. The exact projection is again difficult due to the non-convexity of the feasible set. Thus, we solve it approximately with a technique based on Newton's method; see Algorithm \ref{alg:gradient_projection} in the appendix.




\smallparagraph{Lagrangian Perceptual Attack (LPA)}
The second of our two attacks uses a Lagrangian relaxation of the attack problem (\ref{eq:perceptual_adversarial_optim}) similar to that used by \citet{carlini_towards_2017} for constructing $L_2$ and $L_\infty$ adversarial examples. We call this attack the Lagrangian Perceptual Attack (LPA). To derive the attack, we use the following Lagrangian relaxation of (\ref{eq:perceptual_adversarial_optim}):
\begin{equation}
    \label{eq:lpa_optim}
    \max_{\advexample} \qquad
    \mathcal{L}(\clf(\advexample), y) -
    \lambda \max \Big(0, \| \feat(\advexample) - \feat(\example) \|_2 - \epsilon \Big).
\end{equation}
The perceptual constraint cost, multiplied by $\lambda$ in (\ref{eq:lpa_optim}), is designed to be $0$ as long as the adversarial example is within the allowed perceptual distance; i.e. $\pdist(\advexample, \example) \leq \epsilon$; once $\pdist(\advexample, \example) > \epsilon$, however, it increases linearly by the LPIPS distance from the original input $\example$.  Similar to $L_p$ attacks of \citet{carlini_towards_2017}, we adaptively change $\lambda$ to find an adversarial example within the allowed perceptual distance; see Appendix \ref{subsec:lpa_details} for details.




\section{Perceptual Adversarial Training (PAT)}
\label{sec:adv_train}
The developed perceptual attacks can be used to harden a classifier against a variety of adversarial attacks. The intuition, which we verify in Section \ref{sec:results}, is that if a model is robust against neural perceptual attacks, it can demonstrate an enhanced robustness against other types of unforeseen adversarial attacks. Inspired by adversarial training used to robustify models against $L_p$ attacks, we propose a method called Perceptual Adversarial Training (PAT). 

Suppose we wish to train a classifier $\clf(\cdot)$ over a distribution of inputs and labels $(\example, y) \sim \mathcal{D}$ such that it is robust to the perceptual threat model with bound $\epsilon$. Let $\mathcal{L}_\text{ce}$ denote the cross entropy (negative log likelihood) loss and suppose the classifier $\clf(\cdot)$ is parameterized by $\theta_\clf$. Then, PAT consists of optimizing $\clf(\cdot)$ in a manner analogous to $L_p$ adversarial training \citep{madry_towards_2018}:
\begin{equation}
    \label{eq:pat}
    \min_{\theta_\clf} \mathop{\mathbb{E}}_{(\example, y) \sim \mathcal{D}} \left[ \max_{\pdist(\advexample , \example) \leq \epsilon} \mathcal{L}_\text{ce}(\clf(\advexample), y) \right].
\end{equation}
The training formulation attempts to minimize the worst-case loss within a neighborhood of each training point $\example$. In PAT, the neighborhood is bounded by the LPIPS distance. Recall that the LPIPS distance is itself defined based on a particular neural network classifier. We refer to the normalized, flattened activations of the network used to define LPIPS as $\feat(\cdot)$ and $\theta_\feat$ to refer to its parameters. We explore two variants of PAT differentiated by the choice of the network used to define $\feat(\cdot)$. In \textit{externally-bounded} PAT, a separate, pretrained network is used to calculate $\feat(\cdot)$, the LPIPS distance $\pdist(\cdot, \cdot)$. In \textit{self-bounded} PAT, the same network which is being trained for classification is used to calculate the LPIPS distance, i.e. $\theta_\feat \subseteq \theta_\clf$. Note that in self-bounded PAT the definition of the LPIPS distance changes during the training as the classifier is optimized.

The inner maximization in (\ref{eq:pat}) is intractable to compute exactly. However, we can use the perceptual attacks developed in Section \ref{sec:attacks} to approximately solve it. Since the inner maximization must be solved repeatedly during the training process, we use an inexpensive variant of the LPA attack called Fast-LPA. In contrast to LPA, Fast-LPA does not search over values of $\lambda$. It also does not include a projection step at the end of the attack, which means it may sometimes produce adversarial examples outside the training bound. While this makes it unusable for evaluation, it is fine for training. Using Fast-LPA, PAT is nearly as fast as adversarial training; see Appendix \ref{subsec:training_attack_details} and Algorithm \ref{alg:lagrange_training_attack} for details.

\section{Perceptual Evaluation}
\label{sec:perceptual_study}

We conduct a thorough perceptual evaluation of our NPTM and attacks to ensure that the resulting adversarial examples are imperceptible. We also compare the perceptibility of perceptual adversarial attacks to five narrow threat models: $L_\infty$ and $L_2$ attacks, JPEG attacks~\citep{kang_testing_2019}, spatially transformed adversarial examples (StAdv)~\citep{xiao_spatially_2018}, and functional adversarial attacks (ReColorAdv)~\citep{laidlaw_functional_2019}. The comparison allows us to determine if the LPIPS distance is a good surrogate for human comparisons of similarity. It also allows us to set bounds across threat models with approximately the same level of perceptibility.

To determine how perceptible a particular threat model is at a particular bound (e.g. $L_\infty$ attacks at $\epsilon = 8/255$), we perform an experiment based on just noticeable differences (JND). We show pairs of images to participants on Amazon Mechanical Turk (AMT), an online crowdsourcing platform. In each pair, one image is a natural image from ImageNet-100 and one image is an adversarial perturbation of the natural image, generated using the particular attack against a classifier hardened to that attack. One of the images, chosen randomly, is shown for one second, followed by a blank screen for 250ms, followed by the second image for one second. Then, participants must choose whether they believe the images are the same or different. This procedure is identical to that used by \citet{zhang_unreasonable_2018} to originally validate the LPIPS distance. We report the proportion of pairs for which participants report the images are ``different'' as the \textit{perceptibility} of the attack. In addition to adversarial example pairs, we also include sentinel image pairs which are exactly the same; only 4.1\% of these were annotated as ``different.'' 


We collect about 1,000 annotations of image pairs for each of 3 bounds for all five threat models, plus our PPGD and LPA attacks (14k annotations total for 2.8k image pairs). The three bounds for each attack are labeled as small, medium, and large; bounds with the same label have similar perceptibility across threat models (see Appendix \ref{sec:perceptual_study_details} Table \ref{tab:bounds}). The dataset of image pairs and associated annotations is available for use by the community.

\begin{figure}[t]
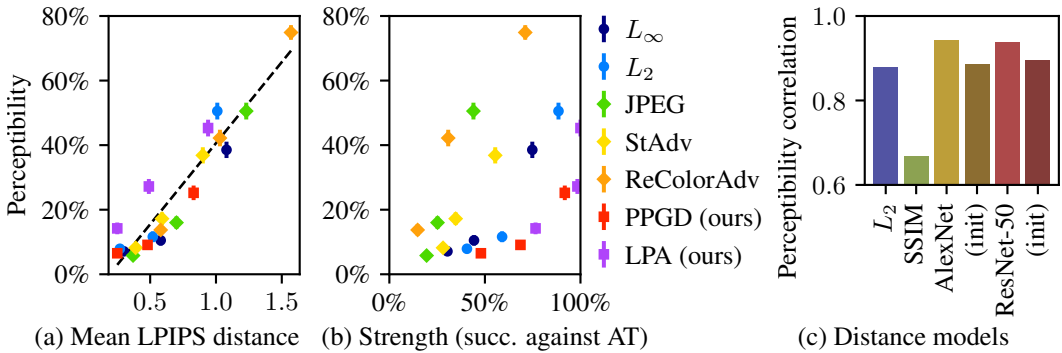

    \centering
    \hspace{-0.1in}
    \begin{minipage}{4in}
        \centering
        \input{images/perceptual_study_small.pgf}
    \end{minipage}
    \begin{minipage}{1.5in}
        \centering
        \input{images/lpips_correlation.pgf}
    \end{minipage}
    \hspace{-0.1in}
    \caption{Results of the perceptual study described in Section \ref{sec:perceptual_study} across five narrow threat models and our two perceptual attacks, each with three bounds. (a) The perceptibility of adversarial examples correlates well with the LPIPS distance (based on AlexNet) from the natural example. (b) The Lagrangian Perceptual Attack (LPA) and Perceptual PGD (PPGD) are strongest at a given perceptibility. Strength is the attack success rate against an adversarially trained classifier. (c) Correlation between the perceptibility of attacks and various distance measures: $L_2$, SSIM~\citep{wang_image_2004}, and LPIPS~\citep{zhang_unreasonable_2018} calculated using various architectures, trained and at initialization.}
    \label{fig:perceptual_study}
\end{figure}

To determine if the LPIPS threat model is a good surrogate for the perceptual threat model, we use various classifiers to calculate the LPIPS distance $\pdist(\cdot, \cdot)$ between the pairs of images used in the perceptual study. For each classifier, we determine the correlation between the mean LPIPS distance it assigns to image pairs from each attack and the perceptibility of that attack (Figure \ref{fig:perceptual_study}c). We find that AlexNet~\citep{krizhevsky_imagenet_2012}, trained normally on ImageNet~\citep{ILSVRC15}, correlates best with human perception of these adversarial examples ($r$ = 0.94); this agrees with \citet{zhang_unreasonable_2018} who also find that AlexNet-based LPIPS correlates best with human perception (Figure \ref{fig:perceptual_study}). A normally trained ResNet-50 correlates similarly, but not quite as well. Because AlexNet is the best proxy for human judgements of perceptual distance, we use it for all externally-bounded evaluation attacks. Note that even with an untrained  network at initialization, the LPIPS distance correlates with human perception better than the $L_2$ distance. This means that even during the first few epochs of self-bounded PAT, the training adversarial examples are perceptually-aligned. 

We use the results of the perceptual study to investigate which attacks are strongest at a particular level of perceptibility. We evaluate each attack on a classifier hardened against that attack via adversarial training, and plot the resulting success rate against the proportion of correct annotations from the perceptual study. Out of the narrow threat models, we find that $L_2$ attacks are the strongest for their perceptibility. However, our proposed PPGD and LPA attacks reduce a PAT-trained classifier to even lower accuracies (8.2\% for PPGD and 0\% for LPA), making it the strongest attack studied.

\section{Experiments}
\label{sec:results}
We compare Perceptual Adversarial Training (PAT) to adversarial training against narrow threat models ($L_p$, spatial, etc.) on CIFAR-10~\citep{krizhevsky_learning_2009} and ImageNet-100 (the subset of ImageNet~\citep{ILSVRC15} containing every tenth class by WordNet ID order). We find that PAT results in classifiers with robustness against a broad range of narrow threat models. We also show that our perceptual attacks, PPGD and LPA, are strong against adversarial training with narrow threat models. We evaluate with externally-bounded PPGD and LPA (Section \ref{sec:attacks}), using AlexNet to determine the LPIPS bound because it correlates best with human judgements (Figure \ref{fig:perceptual_study}c). For $L_2$ and $L_\infty$ robustness evaluation we use AutoAttack \citep{croce2020reliable}, which combines four strong attacks, including two PGD variants and a black box attack, to give reliable evaluation.

\smallparagraph{Evaluation metrics} For both datasets, we evaluate classifiers' robustness to a range of threat models using two summary metrics. First, we compute the \textbf{union accuracy} against all narrow threat models ($L_\infty$, $L_2$, StAdv, ReColorAdv, and JPEG for ImageNet-100); this is the proportion of inputs for which a classifier is robust against \textit{all} these attacks. Second, we compute the \textbf{unseen mean accuracy}, which is the mean of the accuracies against all the threat models not trained against; this measures how well robustness generalizes to other threat models.

\smallparagraph{CIFAR-10} We test ResNet-50s trained on the CIFAR-10 dataset with PAT and adversarial training (AT) against six attacks (see Table \ref{tab:cifar_at}): $L_\infty$ and $L_2$ AutoAttack, StAdv~\citep{xiao_spatially_2018}, ReColorAdv~\citep{laidlaw_functional_2019}, and PPGD and LPA. This allows us to determine if PAT gives robustness against a range of adversarial attacks. We experiment with using various models to calculate the LPIPS distance during PAT. We try using the same model both for classification and to calculate the LPIPS distance (self-bounded PAT). We also use AlexNet trained on CIFAR-10 prior to PAT (externally-bounded PAT). We find that PAT outperforms $L_p$ adversarial training and TRADES \citep{zhang_theoretically_2019}, improving the union accuracy from <5\% to >20\%, and nearly doubling mean accuracy against unseen threat models from 26\% to 49\%. Surprisingly, we find that PAT even outperforms threat-specific AT against StAdv and ReColorAdv; see Appendix \ref{sec:pat_stadv_recoloradv} for more details. Ablation studies of PAT are presented in Appendix \ref{sec:pat_ablation}.

\begin{table}[t]
    \centering
    \caption{Accuracies against various attacks for models trained with adversarial training and Perceptual Adversarial Training (PAT) variants on CIFAR-10.
 Attack bounds are $8/255$ for $L_\infty$, $1$ for $L_2$, 0.5 for PPGD/LPA (bounded with AlexNet), and the original bounds for StAdv/ReColorAdv. Manifold regularization is from \citet{jin_manifold_2020}. See text for explanation of all terms. }
    \label{tab:cifar_at}
    \vspace{-6pt}
    \setlength{\tabcolsep}{4pt}
    \begin{tabular}{l|rr|rrrrr|rr}
        \toprule
        & \bf Union & \bf Unseen & \multicolumn{5}{c|}{\bf Narrow threat models} & \multicolumn{2}{c}{\bf NPTM} \\
        \textbf{Training} & & \multicolumn{1}{c|}{\bf mean} & Clean & $L_\infty$ & $L_2$ & StAdv & ReColor & PPGD & LPA \\
        \midrule
        Normal & 0.0 & 0.1 & 94.8 & 0.0 & 0.0 & 0.0 & 0.4 & 0.0 & 0.0 \\
        \midrule
        AT $L_\infty$ & 1.0 & 19.6 & 86.8 & 49.0 & 19.2 & 4.8 & 54.5 & 1.6 & 0.0 \\
        TRADES $L_\infty$ & 4.6 & 23.3 & 84.9 & 52.5 & 23.3 & 9.2 & 60.6 & 2.0 & 0.0 \\
        AT $L_2$ & 4.0 & 25.3 & 85.0 & 39.5 & 47.8 & 7.8 & 53.5 & 6.3 & 0.3 \\
        AT StAdv & 0.0 & 1.4 & 86.2 & 0.1 & 0.2 & 53.9 & 5.1 & 0.0 & 0.0 \\
        AT ReColorAdv & 0.0 & 3.1 & 93.4 & 8.5 & 3.9 & 0.0 & 65.0 & 0.1 & 0.0 \\
        \midrule
        AT all (random) & 0.7 & --- & 85.2 & 22.0 & 23.4 & 1.2 & 46.9 & 1.8 & 0.1 \\
        AT all (average) & 14.7 & --- & 86.8 & 39.9 & 39.6 & 20.3 & 64.8 & 10.6 & 1.1 \\
        AT all (maximum) & 21.4 & --- & 84.0 & 25.7 & 30.5 & 40.0 & 63.8 & 8.6 & 1.1 \\
        Manifold reg. & 21.2 & 36.2 & 72.1 & 36.8 & 43.4 & 28.4 & 63.1 & 8.7 & 9.1 \\
        \midrule
        PAT-self & 21.9 & 45.6 & 82.4 & 30.2 & 34.9 & 46.4 & 71.0 & 13.1 & 2.1 \\
        PAT-AlexNet & \bf 27.8 & \bf 48.5 & 71.6 & 28.7 & 33.3 & 64.5 & 67.5 & \bf 26.6 & \bf 9.8 \\
        \bottomrule
    \end{tabular}
\end{table}

\smallparagraph{ImageNet-100} We compare ResNet-50s trained on the ImageNet-100 dataset with PAT and adversarial training (Table \ref{tab:imagenet100_at}). Classifiers are tested against seven attacks at the medium bound from the perceptual study (see Section \ref{sec:perceptual_study} and Appendix Table \ref{tab:bounds}). Self- and externally-bounded PAT give similar results. Both produce more than double the next highest union accuracy and also significantly increase the mean robustness against unseen threat models by around 15\%.

\begin{table}[t]
    \centering
    \caption{Comparison of adversarial training against narrow threat models and Perceptual Adversarial Training (PAT) on ImageNet-100. Accuracies are shown against seven attacks with the medium bounds from Table \ref{tab:bounds}. PAT greatly improves accuracy (33\% vs 12\%) against the union of the narrow threat models despite not training against any of them. See text for explanation of all terms.}
    \vspace{-6pt}
    \label{tab:imagenet100_at}
    \setlength{\tabcolsep}{4pt}
    \begin{tabular}{l|rr|rrrrrr|rr}
        \toprule
        & \bf Union & \bf Unseen & \multicolumn{6}{c|}{\bf Narrow threat models} & \multicolumn{2}{c}{\bf NPTM} \\
        \textbf{Training} & & \multicolumn{1}{c|}{\bf mean} & Clean & $L_\infty$ & $L_2$ & JPEG & StAdv & ReColor & PPGD & LPA \\
        \midrule
        Normal & 0.0 & 0.1 & 89.1 & 0.0 & 0.0 & 0.0 & 0.0 & 2.4 & 0.0 & 0.0 \\
        \midrule
        $L_\infty$ & 0.5 & 11.3 & 81.7 & 55.7 & 3.7 & 10.8 & 4.6 & 37.5 & 1.5 & 0.0 \\
        $L_2$ & 12.3 & 31.5 & 75.3 & 46.1 & 41.0 & 56.6 & 22.8 & 31.2 & 22.0 & 0.5 \\
        JPEG & 0.1 & 7.4 & 84.8 & 13.7 & 1.8 & 74.8 & 0.3 & 21.0 & 0.5 & 0.0 \\
        StAdv & 0.6 & 2.1 & 77.1 & 2.6 & 1.2 & 3.7 & 65.3 & 2.9 & 0.6 & 0.0 \\
        ReColorAdv & 0.0 & 0.1 & 90.1 & 0.2 & 0.0 & 0.1 & 0.0 & 69.3 & 0.0 & 0.0 \\
        \midrule
        All (random) & 0.9 & --- & 78.6 & 38.3 & 26.4 & 61.3 & 1.4 & 32.5 & 16.1 & 0.2 \\
        \midrule
        PAT-self & \textbf{32.5} & \textbf{46.4} & 72.6 & 45.0 & 37.7 & 53.0 & 51.3 & 45.1 & 29.2 & \textbf{2.4} \\
        PAT-AlexNet & 25.5 & 44.7 & 75.7 & 46.8 & 41.0 & 55.9 & 39.0 & 40.8 & \textbf{31.1} & 1.6 \\
        \bottomrule
    \end{tabular}
\end{table}

\smallparagraph{Perceptual attacks} On both CIFAR-10 and ImageNet-100, we find that Perceptual PGD (PPGD) and Lagrangian Perceptual Attack (LPA) are the strongest attacks studied. LPA is the strongest, reducing the most robust classifier to 9.8\% accuracy on CIFAR-10 and 2.4\% accuracy on ImageNet-100. Also, models most robust to LPA in both cases are those that have the best union and unseen mean accuracies. This demonstrates the utility of evaluating against LPA as a proxy for adversarial robustness against a range of threat models. See Appendix \ref{sec:attack_experiments} for further attack experiments.

\smallparagraph{Comparison to other defenses against multiple attacks} Besides the baseline of adversarially training against a single attack, we also compare PAT to adversarially training against multiple attacks \citep{tramer_adversarial_2019,maini_adversarial_2019}. We compare three methods for multiple-attack training: choosing a random attack at each training iteration, optimizing the average loss across all attacks, and optimizing the maximum loss across all attacks. The latter two methods are very expensive, increasing training time by a factor equal to the number of attacks trained against, so we only evaluate these methods on CIFAR-10. As in \citet{tramer_adversarial_2019}, we find that the maximum loss strategy leads to the greatest union accuracy among the multiple-attack training methods. However, PAT performs even better on CIFAR-10, \textit{despite training against none of the attacks and taking one fourth of the time to train}. The random strategy, which is the only feasible one on ImageNet-100, performs much worse than PAT. Even the best multiple-attack training strategies still fail to generalize to the unseen neural perceptual attacks, PPGD and LPA, achieving much lower accuracy than PAT.

On CIFAR-10, we also compare PAT to manifold regularization (MR) \citep{jin_manifold_2020}, a non-adversarial training defense. MR gives union accuracy close to PAT-self, but much lower clean accuracy; for PAT-AlexNet, which gives similar clean accuracy to MR, the union accuracy is much higher.

\smallparagraph{Threat model overlap} In Figure \ref{fig:threat_model_overlap}, we investigate how the sets of images vulnerable to $L_2$, spatial, and perceptual attacks overlap. Nearly all adversarial examples vulnerable to $L_2$ or spatial attacks are also vulnerable to LPA. However, there is only partial overlap between the examples vulnerable to $L_2$ and spatial attacks. This helps explain why PAT results in improved robustness against spatial attacks (and other diverse threat models) compared to $L_2$ adversarial training.

\smallparagraph{Why does PAT work better than $L_p$ adversarial training?} In Figure \ref{fig:distance_evaluation}, we give further explanation of why PAT results in improved robustness against diverse threat models. We generate many adversarial examples for the $L_\infty$, $L_2$, JPEG, StAdv, and ReColorAdv threat models and measure their distance from the corresponding natural inputs using $L_p$ distances and the neural perceptual distance, LPIPS. While $L_p$ distances vary widely, LPIPS gives remarkably comparable distances to different types of adversarial examples. Covering all threat models during $L_\infty$ or $L_2$ adversarial training would require using a huge training bound, resulting in poor performance. In contrast, PAT can obtain robustness against all the narrow threat models at a reasonable training bound.

\begin{figure}
    \centering
    \input{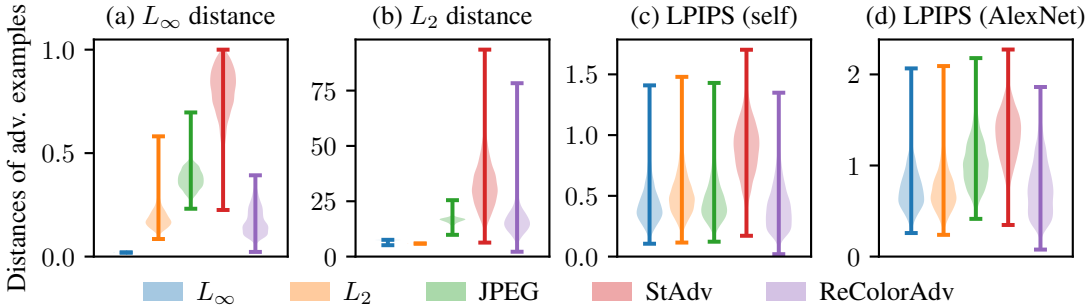}
    \vspace{-18pt}
    \caption{We generate samples via different adversarial attacks using narrow threat models in ImageNet-100 and measure their distances from natural inputs using $L_p$ and LPIPS metrics. The distribution of distances for each metric and threat model is shown as a violin plot. (a-b) $L_p$ metrics assign vastly different distances across perturbation types, making it impossible to train against all of them using $L_p$ adversarial training. (c-d) LPIPS assigns similar distances to similarly perceptible attacks, so a single training method, PAT, can give good robustness across different threat models.}
    \label{fig:distance_evaluation}
\end{figure}

\smallparagraph{Robustness against common corruptions} In addition to evaluating PAT against adversarial examples, we also evaluate its robustness to random perturbations in the CIFAR\=/10\=/C and ImageNet\-/C datasets \citep{hendrycks2019robustness}. We find that PAT gives increased robustness (lower relative mCE) against these corruptions compared to adversarial training; see Appendix~\ref{sec:common_corruptions} for details.

\section{Conclusion}
We have presented attacks and defenses for the neural perceptual threat model (realized by the LPIPS distance) and shown that it closely approximates the true perceptual threat model, the set of all perturbations to natural inputs which fool a model but are imperceptible to humans.
Our work provides a novel method for developing defenses against adversarial attacks that generalize to unforeseen threat models. Our proposed perceptual adversarial attacks and PAT could be extended to other vision algorithms, or even other domains such as audio and text. 


\section*{Acknowledgments}
This project was supported in part by NSF CAREER AWARD 1942230, HR 00111990077, HR00112090132, HR001119S0026, NIST 60NANB20D134, AWS Machine Learning Research Award and Simons Fellowship on ``Foundations of Deep Learning.''





\small

\raggedbottom
\bibliographystyle{unsrtnat}
\bibliography{paper}

\normalsize

\pagebreak
\appendix
\section*{\Large Appendix}
\section{Perceptual Attack Algorithms}

\subsection{Perceptual PGD}
\label{subsec:ppgd_details}

Recall from Section \ref{sec:attacks} that Perceptual PGD (PPGD) consists of repeatedly applying two steps: a first-order step in LPIPS distance to maximize the loss, followed by a projection into the allowed set of inputs. Here, we focus on the first-order step; see Appendix \ref{subsec:projection_details} for how we perform projection onto the LPIPS ball.

We wish to solve the following constrained optimization for the step $\delta$ given the step size $\eta$ and current input $\example$:
\begin{equation}
    \label{eq:ppgd_step_orig}
    \max_{\delta} \quad \mathcal{L}(\clf(\example + \delta), y) \qquad \text{subject to} \quad \left \| J \delta \right \|_2 \leq \eta
\end{equation}
Let $\hat{\clf}(\example) := \mathcal{L} (\clf(\example), y)$ for an input $\example \in \mathcal{X}$. Let $J$ be the Jacobian of $\feat(\cdot)$ at $\example$ and $\nabla \hat{\clf}$ be the gradient of $\hat{\clf}(\cdot)$ at $\example$.

\addtocounter{lemma}{-1}
\begin{lemma}
    The first-order approximation of (\ref{eq:ppgd_step_orig}) is
    \begin{equation}
        \label{eq:ppgd_step_restated}
        \max_{\delta} \quad \hat{\clf}(\example) + (\nabla \hat{\clf})^\top \delta \qquad \text{subject to} \quad \left \| J \delta \right \|_2 \leq \eta,
    \end{equation}
    and can be solved in closed-form by
    $$
    \delta^* = \eta \frac{(J^\top J)^{-1} (\nabla \hat{\clf})}
    {\| (J^+)^\top (\nabla \hat{\clf}) \|_2}.
    $$
    where $J^+$ is the pseudoinverse of $J$.
\end{lemma}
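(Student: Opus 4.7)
The plan is to treat (\ref{eq:ppgd_step_restated}) as a linearly constrained quadratically constrained program and solve it via Lagrange multipliers. Since $\hat{\clf}(\example)$ is a constant with respect to $\delta$, I can drop it and focus on maximizing $(\nabla \hat{\clf})^\top \delta$ subject to $\|J\delta\|_2 \leq \eta$. Because the objective is linear and strictly increasing along any direction for which $(\nabla \hat{\clf})^\top \delta > 0$, the constraint must be active at optimum, so I may replace $\leq$ with $=$ and then, equivalently, constrain $\|J\delta\|_2^2 = \eta^2$.

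Next I would form the Lagrangian
\[
\mathcal{L}(\delta, \lambda) \;=\; (\nabla \hat{\clf})^\top \delta \;-\; \lambda\big(\|J\delta\|_2^2 - \eta^2\big)
\]
and set $\nabla_\delta \mathcal{L} = 0$, giving $\nabla \hat{\clf} = 2\lambda J^\top J\, \delta$. Under the assumption that $J \in \mathbb{R}^{m \times n}$ has full column rank (which is the case in LPIPS, where the flattened activation dimension $m$ exceeds the input dimension $n$), the Gram matrix $J^\top J$ is invertible, so I can solve for $\delta = \tfrac{1}{2\lambda}(J^\top J)^{-1} \nabla \hat{\clf}$. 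Substituting this into the active constraint $\|J\delta\|_2 = \eta$ and solving for $\lambda$ pins down the scalar, yielding $\tfrac{1}{2\lambda} = \eta / \|J(J^\top J)^{-1} \nabla \hat{\clf}\|_2$.

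The final step is a clean-up that rewrites the denominator in terms of the pseudoinverse. Using $J^+ = (J^\top J)^{-1} J^\top$ (valid under the same full-column-rank assumption) and the symmetry of $J^\top J$, one obtains $(J^+)^\top = J(J^\top J)^{-1}$, so the denominator becomes $\|(J^+)^\top \nabla \hat{\clf}\|_2$, giving exactly the stated formula. A brief remark would verify this is indeed a maximizer and not merely a stationary point: the constraint set is compact, the objective is continuous, and the only critical points of the Lagrangian correspond to $\pm\delta^*$, with the $+$ sign producing the larger objective value.

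The main obstacle I anticipate is justifying the invertibility of $J^\top J$; if $J$ is rank-deficient, the stated formula is ill-defined and one would need to interpret $(J^\top J)^{-1}$ via the pseudoinverse on the range of $J$ and handle the nullspace of $J$ separately (along which the objective is unbounded unless $\nabla \hat{\clf}$ is orthogonal to it, since such directions don't change $\|J\delta\|$). I would either state the full-column-rank assumption explicitly or remark that in practice $\delta^*$ is only approximated by conjugate gradient, so the exact formula serves as a target rather than a numerically computed quantity.
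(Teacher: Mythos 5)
Your proposal follows essentially the same route as the paper's proof: a Lagrange-multiplier computation on the linearized objective with the squared constraint, solving $J^\top J\delta = \lambda \nabla\hat{\clf}$ for $\delta$, substituting into the constraint to pin down the scalar, and rewriting $J(J^\top J)^{-1}$ as $(J^+)^\top$. Your added care about the activity of the constraint, the full-column-rank assumption behind the invertibility of $J^\top J$, and the sign check distinguishing the maximizer from the other critical point goes beyond what the paper writes down, but the core argument is identical and correct.
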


\begin{proof}
We solve (\ref{eq:ppgd_step_restated}) using Lagrange multipliers. First, we take the gradient of the objective:
\begin{equation*}
\nabla_\delta \left[ \hat{\clf}(\example) + (\nabla \hat{\clf})^\top \delta \right] = \nabla \hat{\clf}
\end{equation*}
We can rewrite the constraint by squaring both sides to obtain
\begin{equation*}
\delta^\top J^\top J \delta - \epsilon^2 \leq 0
\end{equation*}
Taking the gradient of the constraint gives
\begin{equation*}
\nabla_\delta \left[ \delta^\top J^\top J \delta - \epsilon^2 \right] = 2 J^\top J \delta
\end{equation*}
Now, we set one gradient as a multiple of the other and solve for $\delta$:
\begin{align}
    J^\top J \delta & = \lambda (\nabla \hat{\clf}) \label{eq:ppgd_linear_system} \\
    \delta & = \lambda (J^\top J)^{-1} (\nabla \hat{\clf}) \label{eq:ppgd_delta}
\end{align}
Substituting into the constraint from (\ref{eq:ppgd_step_restated}) gives
\begin{align*}
    \| J \delta \|_2 & = \eta \\
    \| J \lambda (J^\top J)^{-1} (\nabla \hat{\clf}) \|_2 & = \eta \\
    \lambda \| J (J^\top J)^{-1} (\nabla \hat{\clf}) \|_2 & = \eta \\
    \lambda \| ((J^\top J)^{-1} J^\top)^\top (\nabla \hat{\clf}) \|_2 & = \eta \\
    \lambda \| (J^+)^\top (\nabla \hat{\clf}) \|_2 & = \eta \\
    \lambda & = \frac{\eta}{\| (J^+)^\top (\nabla \hat{\clf}) \|_2}
\end{align*}
We substitute this value of $\lambda$ into (\ref{eq:ppgd_delta}) to obtain
\begin{equation}
    \label{eq:ppgd_solution}
    \delta^* = \eta \frac{(J^\top J)^{-1} (\nabla \hat{\clf})}
    {\| (J^+)^\top (\nabla \hat{\clf}) \|_2}.
\end{equation}
\end{proof}

\begin{figure}[t]
    \centering
    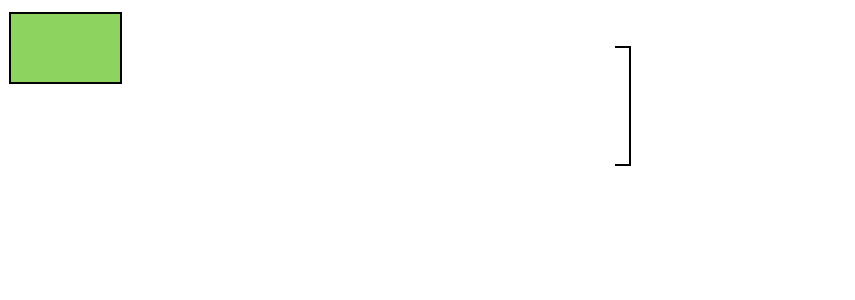
    \caption{Creating an adversarial example in the LPIPS threat model.}
    \label{fig:lpips}
\end{figure}

\smallparagraph{Solution with conjugate gradient method} Calculating (\ref{eq:ppgd_solution}) directly is computationally intractable for most neural networks, since inverting $J^\top J$ and calculating the pseudoinverse of $J$ are expensive. Instead, we approximate $\delta^*$ by using the conjugate gradient method to solve the following linear system, based on (\ref{eq:ppgd_linear_system}):
\begin{equation}
    \label{eq:ppgd_cg_system}
    J^\top J \delta = \nabla \hat{\clf}
\end{equation}
$\nabla \hat{\clf}$ is easy to calculate using backpropagation. The conjugate gradient method does not require calculating fully $J^\top J$; instead, it only requires the ability to perform matrix-vector products $J^\top J v$ for various vectors $v$.

We can approximate $J v$ using finite differences given a small, positive value $h$:
$$J v \approx \frac{\feat(\example + h v) - \feat(\example)}{h}$$

Then, we can calculate $J^\top J v$ by introducing an additional variable $u$ and using autograd:

\begin{align*}
  \nabla_u \left[\left(\feat(x + u)\right)^\top J v \right]_{u = 0} & =
  \left[ \left( \frac{d \feat}{d u} (x + u) \right)^\top J v + \left(\feat(x + u)\right)^\top \frac{d}{d u} J v \right]_{u = 0} \\
  & = \left[ \left( \frac{d \feat}{d u} (x + u) \right)^\top J v + \left(\feat(x + u)\right)^\top \textbf{0} \right]_{u = 0} \\
  & = \left(\frac{d \feat}{d u} (x) \right)^\top J v = J^\top J v
\end{align*}

This allows us to efficiently approximate the solution of (\ref{eq:ppgd_cg_system}) to obtain $(J^\top J)^{-1} \nabla \hat{\clf}$. We use 5 iterations of the conjugate gradient algorithm in practice.

From there, it easy to solve for $\lambda$, given that $(J^+)^\top \nabla \hat{\clf} = J (J^\top J)^{-1} \nabla \hat{\clf}$. Then, $\delta^*$ can be calculated via (\ref{eq:ppgd_delta}). See Algorithm \ref{alg:ppgd} for the full attack.

\smallparagraph{Computational complexity}
PPGD's running time scales with the number of steps $T$ and the number of conjugate gradient iterations $K$. It also depends on whether the attack is self-bounded (the same network is used for classification and the LPIPS distance) or externally-bounded (different networks are used).

For each of the $T$ steps, $\theta(\advexample)$, $\nabla_{\advexample} \mathcal{L}(\clf(\advexample), y)$, and $\feat(\advexample + h \delta_k)$  must be calculated once (lines 4 and 15 in Algorithm \ref{alg:ppgd}). This takes 2 forward passes and 1 backward pass for the self-bounded case, and 3 forward passes and 1 backward pass for the externally-bounded case.

In addition, $J^\top J v$ needs to be calculated (in the \textsc{MultiplyJacobian} routine) $K+1$ times. Each calculation of $J^\top J v$ requires 1 forward and 1 backward pass, assuming $\feat(\advexample)$ is already calculated.

Finally, the projection step takes $n + 1$ forward passes for $n$ iterations of the bisection method (see Section \ref{subsec:projection_details}).

In all, the algorithm requires $T (K + n + 4)$ forward passes and $T (K + n + 3)$ backward passes in the self-bounded case. In the externally-bounded case, it requires $T (K + n + 5)$ forward passes and the same number of backward passes.

\begin{algorithm}[H]
	\caption{Perceptual PGD (PPGD)}
    \label{alg:ppgd}
	\begin{algorithmic}[1]
	    \Procedure{PPGD}{classifier $\clf(\cdot)$, LPIPS network $\feat(\cdot)$, input $\example$, label $y$, bound $\epsilon$, step $\eta$}
		\State $\advexample \gets \example + 0.01 * \mathcal{N}(0, 1)$ \Comment{initialize perturbations with random Gaussian noise}
		\For{$t$ in $1, \dots, T$} \Comment{$T$ is the number of steps}
		\State $\nabla \hat{\clf} \gets \nabla_{\advexample} \mathcal{L}(\clf(\advexample), y)$
		\State $\delta_0 \gets 0$
		\State $r_0 \gets \nabla \hat{\clf} - \textsc{MultiplyJacobian}(\feat, \advexample, \delta_0)$
		\State $p_0 \gets r_0$
		\For{$k$ in $0, \dots, K - 1$} \Comment{conjugate gradient algorithm; we use $K = 5$ iterations}
		\State $\alpha_k \gets \frac{r_k^\top r_k}{p_k^\top \textsc{MultiplyJacobian}(\feat, \advexample, p_k)}$
		\State $\delta_{k+1} \gets \delta_k + \alpha_k p_k$
		\State $r_{k+1} \gets r_k - \alpha_k \textsc{MultiplyJacobian}(\feat, \advexample, p_k)$
		\State $\beta_k \gets \frac{r_{k+1}^\top r_{k+1}}{r_k^\top r_k}$
		\State $p_{k+1} \gets r_{k+1} + \beta_k p_k$
		\EndFor
		\State $m \gets \| \feat(\advexample + h \delta_k) - \feat(\advexample) \| / h$ \Comment{$m \approx \| J \delta_k \|$ for small $h$; we use $h = 10^{-3}$}
		\State $\advexample \gets (\eta / m) \delta_k$
		\State $\advexample \gets \textsc{Project}(\pdist, \advexample, \example, \epsilon)$
		\EndFor
		\State \textbf{return} $\advexample$
		\EndProcedure
		\State
		\Procedure{MultiplyJacobian}{$\feat(\cdot)$, $\advexample$, $\bv$} \Comment{calculates $J^\top J \bv$; $J$ is the Jacobian of $\feat$ at $\advexample$}
		\State $J \bv \gets (\feat(\advexample + h \bv) - \feat(\advexample)) / h$ \Comment{$h$ is a small positive value; we use $h = 10^{-3}$}
		\State $J^\top J \bv \gets \nabla_\bu \left[ \feat(\advexample + \bu)^\top J \bv \right]_{\bu = 0}$
		\State \textbf{return} $J^\top J \bv$
		\EndProcedure
	\end{algorithmic}
\end{algorithm}

\subsection{Lagrangian Perceptual Attack (LPA)}
\label{subsec:lpa_details}

Our second attack, Lagrangian Perceptual Attack (LPA), optimizes a Lagrangian relaxation of the perceptual attack problem (\ref{eq:perceptual_adversarial_optim}):
\begin{equation}
    \label{eq:lpa_optim_appendix}
    \max_{\advexample} \qquad
    \mathcal{L}(\clf(\advexample), y) -
    \lambda \max \Big(0, \| \feat(\advexample) - \feat(\example) \|_2 - \epsilon \Big).
\end{equation}
To optimize (\ref{eq:lpa_optim_appendix}), we use a variation of gradient descent over $\advexample$, starting at $\example$ with a small amount of noise added. We perform our modified version of gradient descent for $T$ steps. We use a step size $\eta$, which begins at $\epsilon$ and decays exponentially to $\epsilon/10$.

At each step, we begin by taking the gradient of (\ref{eq:lpa_optim_appendix}) with respect to $\advexample$; let $\Delta$ refer to this gradient. Then, we normalize $\Delta$ to have $L_2$ norm 1, i.e. $\hat{\Delta} = \Delta / \| \Delta \|_2$. We wish to take a step in the direction of $\hat{\Delta}$ of size $\eta$ in LPIPS distance. If we wanted to take a step of size $\eta$ in $L_2$ distance, we could just take the step $\eta \hat{\Delta}$. However, taking a step of particular size in LPIPS distance is harder. We assume that the LPIPS distance is approximately linear in the direction $\hat{\Delta}$. We can approximate the directional derivative of the LPIPS distance in the direction $\hat{\Delta}$ using finite differences:
\begin{equation*}
    \frac{d}{d \alpha} \pdist(\advexample, \advexample + \alpha \hat{\Delta}) \approx \frac{\pdist(\advexample, \advexample + h \Delta)}{h} = m.
\end{equation*}
Here, $h$ is a small positive value, and we assign the approximation of the directional derivative to $m$. Now, we can write the first-order Taylor expansion of the perceptual distance torwards the direction $\hat{\Delta}$ as follows:
\begin{equation*}
    \pdist(\advexample, \advexample + \alpha \hat{\Delta}) \approx \pdist(\advexample, \advexample) + m \alpha = m \alpha.
\end{equation*}
we want to take a step of size $\eta$. Plugging in and solving, we obtain
\begin{align*}
    \eta = \pdist(\advexample, \advexample + \alpha \hat{\Delta}) & \approx m \alpha \\
    \eta & \approx m \alpha \\
    \eta / m & \approx \alpha.
\end{align*}
So, the approximate step we should take is $(\eta / m) \hat{\Delta}$. We take this step at each of the $T$ iterations of our modified gradient descent method.

We begin with $\lambda = 10^{-2}$. After performing gradient descent, if $\pdist(\example, \advexample) > \epsilon$ (i.e. the adversarial example is outside the constraint) we increase $\lambda$ by a factor of 10 and repeat the optimization. We repeat this entire process five times, meaning we search over $\lambda \in \{10^{-2}, 10^{-1}, 10^0, 10^1, 10^2\}$. Finally, if the resulting adversarial example is still outside the constraint, we project it into the threat model; see Appendix \ref{alg:line_search_projection}.

\smallparagraph{Computational complexity}
LPA's running time scales with the number of iterations $S$ used to search for $\lambda$ as well as the number of gradient descent steps $T$. $\feat(\example)$ may be calculated once during the entire attack, which speeds it up. Then, each step of gradient descent requires 2 forward and 1 backward passes in the self-bounded case, and 3 forward and 2 backward passes in the externally-bounded case.

The projection at the end of the attack requires $n + 1$ forward passes for $n$ iterations of the bisection method (see Section \ref{subsec:projection_details}).

In total, the attack requires $2 S T + n + 2$ forward passes and $S T + n + 2$ backward passes in the self-bounded case, and $3 S T + n + 2$ forward passes and $2 S T + n + 2$ backward passes in the externally-bounded case.

\begin{algorithm}[H]
	\caption{Lagrangian Perceptual Attack (LPA)}
    \label{alg:lagrange_search_attack}
	\begin{algorithmic}[1]
	    \Procedure{LPA}{classifier network $\clf(\cdot)$, LPIPS distance $\pdist(\cdot, \cdot)$, input $\example$, label $y$, bound $\epsilon$}
		\State $\lambda \gets 0.01$
		\State $\advexample \gets \example + 0.01 * \mathcal{N}(0, 1)$ \Comment{initialize perturbations with random Gaussian noise}
		\For{$i$ in $1, \dots, S$} \Comment{we use $S = 5$ iterations to search for the best value of $\lambda$}
		\For{$t$ in $1, \dots, T$} \Comment{$T$ is the number of steps}
		\State $\Delta \gets \nabla_{\advexample} \left[ \mathcal{L} (\clf(\advexample), y) - \lambda \max\big(0, \pdist(\advexample, \example\big) - \epsilon) \right]$ \Comment{take the gradient of (\ref{eq:lpa_optim})}
		\State $\hat{\Delta} = \Delta / \| \Delta \|_2$ \Comment{normalize the gradient}
		\State $\eta = \epsilon * (0.1)^{t / T}$ \Comment{the step size $\eta$ decays exponentially}
		\State $m \gets \pdist(\advexample, \advexample + h \hat{\Delta}) / h$ \Comment{$m \approx$ derivative of $\pdist(\advexample, \cdot)$ in the direction of $\hat{\Delta}$; $h = 0.1$}
		\State $\advexample \gets \advexample + (\eta / m) \hat{\Delta}$ \Comment{take a step of size $\eta$ in LPIPS distance}
		\EndFor
		\If{$\pdist(\advexample, \example) > \epsilon$}
		\State $\lambda \gets 10 \lambda$ \Comment{increase $\lambda$ if the attack goes outside the bound}
		\EndIf
		\EndFor
		\State $\advexample \gets \textsc{Project}(\pdist, \advexample, \example, \epsilon)$
		\State \textbf{return} $\advexample$
		\EndProcedure
	\end{algorithmic}
\end{algorithm}

\subsection{Fast Lagrangian Perceptual Attack}
\label{subsec:training_attack_details}

We use the Fast Lagrangian Perceptual Attack (Fast-LPA) for Perceptual Adversarial Training (PAT, see Section \ref{sec:adv_train}). Fast-LPA is similar to LPA (Appendix \ref{subsec:lpa_details}), with two major differences. First, Fast-LPA does not search over $\lambda$ values; instead, during the $T$ gradient descent steps, $\lambda$ is increased exponentially from 1 to 10. Second, we remove the projection step at the end of the attack. This means that Fast-LPA may produce adversarial examples outside the threat model. This means that Fast-LPA cannot be used for evaluation, but it is fine for training.

\smallparagraph{Computational complexity}
Fast-LPA's running time can be calculated similarly to LPA's (see Section \ref{subsec:lpa_details}), except that $S = 1$ and there is no projection step. Let $T$ be the number of steps taken during the attack. Then Fast-LPA requires $2 T + 1$ forward passes and $T + 1$ backward passes for the self-bounded case, and $3 T + 1$ forward passes and $2 T + 1$ backward passes for the externally-bounded case.

In comparison, PGD with $T$ iterations requires $T$ forward passes and $T$ backward passes. Thus, Fast-LPA is slightly slower, requiring $T+1$ more forward passes and no more backward passes.

\begin{algorithm}[H]
	\caption{Fast Lagrangian Perceptual Attack (Fast-LPA)}
    \label{alg:lagrange_training_attack}
	\begin{algorithmic}[1]
	    \Procedure{FastLPA}{classifier network $\clf(\cdot)$, LPIPS distance $\pdist(\cdot, \cdot)$, input $\example$, label $y$, bound $\epsilon$}
		\State $\advexample \gets \example + 0.01 * \mathcal{N}(0, 1)$ \Comment{initialize perturbations with random Gaussian noise}
		\For{$t$ in $1, \dots, T$} \Comment{$T$ is the number of steps}
		\State $\lambda \gets 10^{t/T}$ \Comment{$\lambda$ increases exponentially}
		\State $\Delta \gets \nabla_{\advexample} \left[ \mathcal{L} (\clf(\advexample), y) - \lambda \max\big(0, \pdist(\advexample, \example\big) - \epsilon) \right]$ \Comment{take the gradient of (\ref{eq:lpa_optim})}
		\State $\hat{\Delta} = \Delta / \| \Delta \|_2$ \Comment{normalize the gradient}
		\State $\eta = \epsilon * (0.1)^{t / T}$ \Comment{the step size $\eta$ decays exponentially}
		\State $m \gets \pdist(\advexample, \advexample + h \hat{\Delta}) / h$ \Comment{$m \approx$ derivative of $\pdist(\advexample, \cdot)$ in the direction of $\hat{\Delta}$; $h = 0.1$}
		\State $\advexample \gets \advexample + (\eta / m) \hat{\Delta}$ \Comment{take a step of size $\eta$ in LPIPS distance}
		\EndFor
		\State \textbf{return} $\advexample$
		\EndProcedure
	\end{algorithmic}
\end{algorithm}

\subsection{Perceptual Projection}
\label{subsec:projection_details}

We explored two methods of projecting adversarial examples into the LPIPS thread model. The method we use throughout the paper is based on Newton's method and is shown in algorithm \ref{alg:gradient_projection}. However, we also experimented with the bisection root finding method, shown in Algorithm \ref{alg:line_search_projection} (also see Appendix \ref{sec:attack_experiments}).

In general, given an adversarial example $\advexample$, original input $\example$, and LPIPS bound $\epsilon$, we wish to find a projection $\advexample'$ of $\advexample$ such that $\pdist(\advexample', \example) \leq \epsilon$. Assume for this section that $\pdist(\advexample, \example) > \epsilon$, i.e. the current adversarial example $\advexample$ is outside the bound. If $\pdist(\advexample, \example) \leq \epsilon$, then we can just let $\advexample' = \advexample$ and be done.

\smallparagraph{Newton's method}
The second projection method we explored uses the generalized Newton–Raphson method to attempt to find the closest projection $\advexample'$ to the current adversarial example $\advexample$ such that the projection is within the threat model, i.e. $\pdist(\advexample', \example) \leq \epsilon$. To find such a projection, we again define a function $r(\cdot)$ and look for its roots:
\begin{equation*}
    r(\advexample') = \pdist(\advexample', \example) - \epsilon.
\end{equation*}
If we can find a projection $\advexample'$ close to $\advexample$ such that $r(\advexample') \leq 0$, then this projection will be contained within the threat model, since
\begin{equation*}
    r(\advexample') \leq 0 \Rightarrow \pdist(\advexample', \example) \leq \epsilon.
\end{equation*}
To find such a root, we use the generalized Newton-Raphson method, an iterative algorithm. Beginning with $\advexample'_0 = \advexample$, we update $\advexample'$ iteratively using the step
\begin{equation*}
    \advexample'_{i + 1} = \advexample'_i - \Big[ \nabla r(\advexample'_i) \Big]^+ \Big( r(\advexample'_i) + s \Big),
\end{equation*}
where $A^+$ denotes the pseudoinverse of $A$, and $s$ is a small positive constant (the ``overshoot''), which helps the algorithm converge. We continue this process until $r(\advexample'_t) \leq 0$, at which point the projection is complete.

This algorithm usually takes 2-3 steps to converge with $s=10^{-2}$. Each step requires 1 forward and 1 backward pass to calculate $r(\advexample'_t)$ and its gradient. The method also requires 1 forward pass at the beginning to calculate $\feat(\example)$.

\begin{algorithm}[H]
	\caption{Perceptual Projection (Newton's Method)}
	\label{alg:gradient_projection}
	\begin{algorithmic}
		\Procedure{Project}{LPIPS distance $\pdist(\cdot, \cdot)$, adversarial example $\advexample$, original input $\example$, bound $\epsilon$}
		\State $\advexample'_0$
		\For{$i$ in $0, \dots$}
		\State $r(\advexample'_i) \gets \pdist(\advexample'_i, \example) - \epsilon$
		\If{$r(\advexample'_i) \leq 0$}
		\State \textbf{return} $\advexample'_i$
		\EndIf
		\State $\advexample'_{i + 1} = \advexample'_i - \Big[ \nabla r(\advexample'_i) \Big]^+ \Big( r(\advexample'_i) + s \Big)$ \Comment{$s$ is the ``overshoot''; we use $s=10^{-2}$}
		\EndFor
		\EndProcedure
	\end{algorithmic}
\end{algorithm}

\smallparagraph{Bisection method}
The first projection method we explored (and the one we use throughout the paper) attempts to find a projection $\advexample'$ along the line connecting the current adversarial example $\advexample$ and original input $\example$. Let $\delta = \advexample - \example$. Then we can represent our final projection $\advexample'$ as a point between $\example$ and $\advexample$ as
\begin{equation*}
    \advexample' = \example + \alpha \delta,
\end{equation*}
for some $\alpha \in [0, 1]$. If $\alpha = 0$, $\advexample' = \example$; if $\alpha = 1$, $\advexample' = \advexample$. Now, define a function $r: [0, 1] \rightarrow \mathbb{R}$ as
\begin{equation*}
    r(\delta) = \pdist(\example + \alpha \delta, \example) - \epsilon.
\end{equation*}
This function has the following properties:
\begin{enumerate}
    \item $r(0) < 0$, since $r(0) = \pdist(\example, \example) - \epsilon = -\epsilon$.
    \item $r(1) > 0$, since $r(1) = \pdist(\advexample, \example) - \epsilon > 0$ because $\pdist(\advexample, \example) > \epsilon$.
    \item $r(\alpha) = 0$ iff $\pdist(\advexample', \example) = \epsilon$.
\end{enumerate}
We use the bisection root finding method to find a root $\alpha^*$ of $r(\cdot)$ on the interval $[0, 1]$, which exists since $r(\cdot)$ is continuous and because of items 1 and 2 above. By item 3, at this root, the projected adversarial example is within the threat model:
\begin{equation*}
    \pdist(\advexample', \example) = \pdist(\example + \alpha^* \delta, \example) = r(\alpha^*) + \epsilon = \epsilon 
\end{equation*}

We use $n=10$ iterations of the bisection method to calculate $\alpha^*$. This requires $n+1$ forward passes through the LPIPS network, since $\feat(\example)$ must be calculated once, and $\feat(\example + \alpha \delta)$ must be calculated $n$ times. See Algorithm \ref{alg:line_search_projection} for the full projection algorithm.

\begin{algorithm}[H]
	\caption{Perceptual Projection (Bisection Method)}
	\label{alg:line_search_projection}
	\begin{algorithmic}
		\Procedure{Project}{LPIPS distance $\pdist(\cdot, \cdot)$, adversarial example $\advexample$, original input $\example$, bound $\epsilon$}
		\State $\alpha_\text{min}, \alpha_\text{max} \gets 0, 1$
		\State $\delta \gets \advexample - \example$
		\For{$i$ in $1,\dots, n$}
		\State $\alpha \gets (\alpha_\text{min} + \alpha_\text{max}) / 2$
		\State $\advexample' \gets \example + \alpha \delta$
		\If{$\pdist(\example, \advexample') > \epsilon$}
		\State $\alpha_\text{max} \gets \alpha$
		\Else
		\State $\alpha_\text{min} \gets \alpha$
		\EndIf
		\EndFor
		\State \textbf{return} $\advexample'$
		\EndProcedure
	\end{algorithmic}
\end{algorithm}

\section{Additional Related Work}

Here, we expand on the related work discussed in Section \ref{sec:related_work} discuss some additional existing work on adversarial robustness. 

\smallparagraph{Adversarial attacks}
Much of the initial work on adversarial robustness focused on perturbations to natural images which were bounded by the $L_2$ or $L_\infty$ distance \citep{carlini_towards_2017,goodfellow_explaining_2015,madry_towards_2018}. However, recently the community has discovered many other types of perturbations that are imperceptible and can be optimized to fool a classifier, but are outside $L_p$ threat models. These include spatial perturbations using flow fields~\citep{xiao_spatially_2018}, translation and rotation~\citep{engstrom_rotation_2017}, and Wassterstein distance bounds~\citep{wong_wasserstein_2019}. Attacks that manipulate the colors in images uniformly also been proposed~\citep{hosseini_semantic_2018,hosseini_limitation_2017,zhang_limitations_2019} and have been generalized into ``functional adversarial attacks'' by \citet{laidlaw_functional_2019}.


A couple papers have proposed adversarial threat models that do not focus on a simple, manually defined perturbation type. 
\citet{dunn_semantic_2020} use a generative model of images; they perturb the features at various layers in the generator to create adversarial examples.
\citet{xu_towards_2020} train an autoencoder and then perturb images in representation space rather than pixel space.

\section{Additional Adversarial Examples}

\begin{figure}[H]
    \centering
    \begin{tabularx}{\textwidth}{@{\hskip -0in}hffff@{\hskip -0.2in}}
        Original & Self-bd. PPGD & External-bd. PPGD & Self-bd. LPA & External-bd. LPA
    \end{tabularx}
    \includegraphics[width=\textwidth]{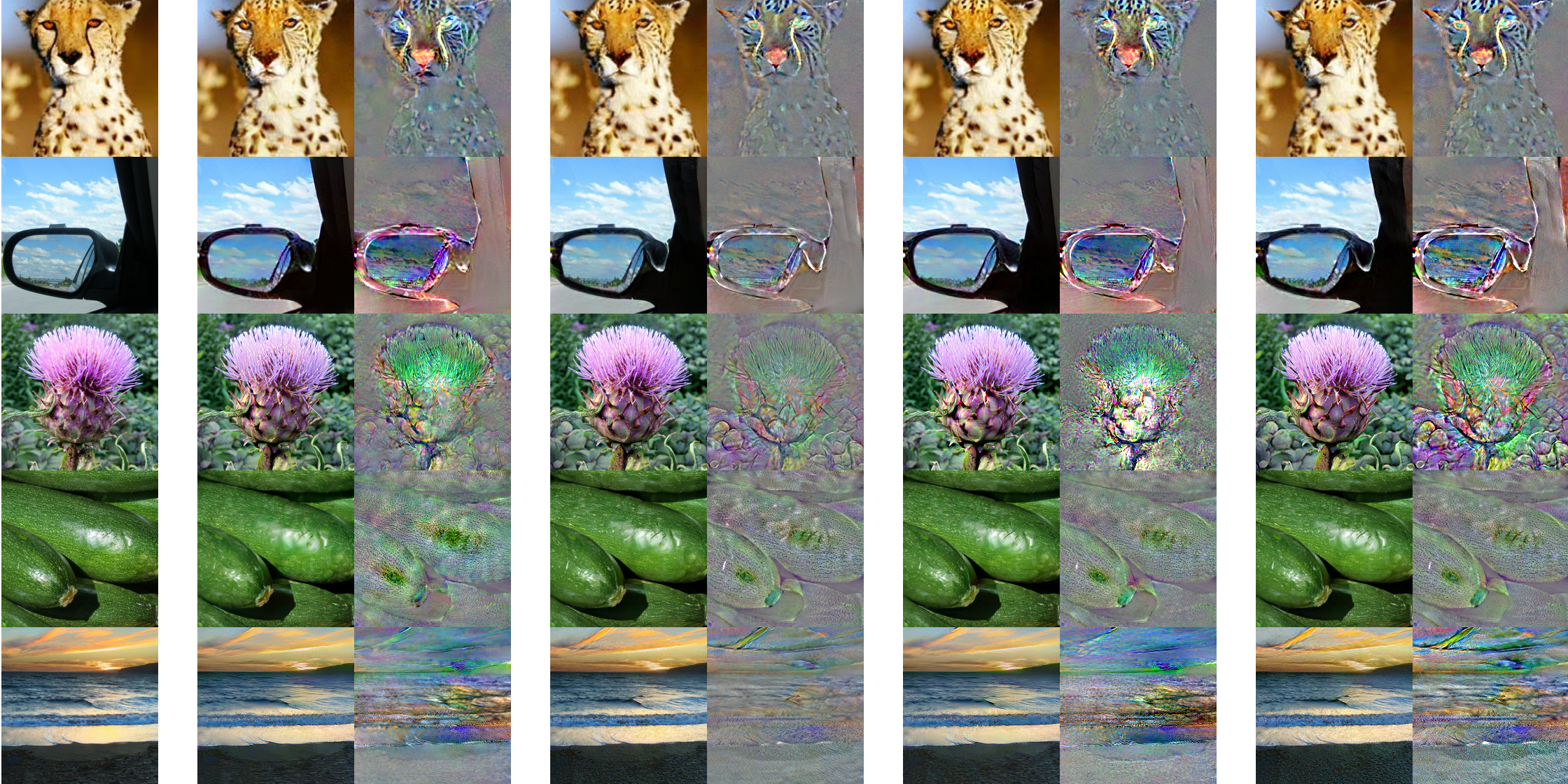}
    \caption{Adversarial examples generated using self-bounded and externally-bounded PPGD and LPA perceptual adversarial attacks (Section \ref{sec:attacks}) with a large bound. Original images are shown in the left column and magnified differences from the original are shown to the right of the examples.}
    \label{fig:imagenet_examples}
\end{figure}

\section{Additional Perceptual Study Results}
\label{sec:perceptual_study_details}

\begin{table}[H]
    \centering
    \caption{Bounds and results from the perceptual study. Each threat model was evaluated with a small, medium, and large bound. Bounds for $L_2$, $L_\infty$, and JPEG attacks (first three rows) are given assuming input image is in the range $[0, 255]$. Perceptibility (perc.) is the proportion of natural input-adversarial example pairs annotated as ``different'' by participants. Strength (str.) is the success rate when attacking a classifier adversarially trained against that threat model (higher is stronger). Perceptual attacks (PPGD and LPA, see Section \ref{sec:attacks}) are externally bounded with AlexNet. All experiments on ImageNet-100.}
    \begin{tabular}{l|lrr|lrr|lrr}
        \toprule
        \bf Threat model & \multicolumn{3}{c|}{\bf Small bound} & \multicolumn{3}{c|}{\bf Medium bound} & \multicolumn{3}{c}{\bf Large bound} \\
        & Bound & Perc. & Str. & Bound & Perc. & Str. & Bound & Perc. & Str. \\
        \midrule
        \bf $L_\infty$ & 2 & 7\% & 30.4\% & 4 & 11\% & 44.3\% & 8 & 39\% & 55.7\% \\
        \bf $L_2$ & 600 & 8\% & 40.6\% & 1200 & 12\% & 59.0\% & 2400 & 51\% & 88.5\% \\
        \bf JPEG-$L_\infty$ & 0.0625 & 6\% & 19.5\% & 0.125 & 16\% & 25.2\% & 0.25 & 51\% & 44.0\% \\
        \bf StAdv & 0.025 & 8\% & 28.1\% & 0.05 & 17\% & 34.7\% & 0.1 & 37\% & 55.4\% \\
        \bf ReColorAdv & 0.03 & 14\% & 14.8\% & 0.06 & 42\% & 30.8\% & 0.12 & 75\% & 71.1\% \\
        \bf PPGD & 0.25 & 7\% & 47.9\% & 0.5 & 9\% & 68.6\% & 1 & 25\% & 91.8\% \\
        \bf LPA & 0.25 & 14\% & 76.5\% & 0.5 & 27\% & 98.4\% & 1 & 45\% & 100.0\% \\
        \bottomrule
    \end{tabular}
    \label{tab:bounds}
\end{table}

\section{Perceptual Attack Experiments}
\label{sec:attack_experiments}

We experiment with variations of the two validation attacks, PPGD and LPA, described in Section \ref{sec:attacks}.  As described in Appendix \ref{subsec:projection_details}, we developed two methods for projecting candidate adversarial examples into the LPIPS ball surrounding a natural input. We attack a single model using PPGD and LPA with both projection methods. We also compare self-bounded to externally-bounded attacks.

We find that LPA tends to be more powerful than PPGD. Finally, we note that externally-bounded LPA is extremely powerful, reducing the accuracy of a PAT-trained classifier on ImageNet-100 to just 2.4\%.

Besides these experiments, we always use externally-bounded attacks with AlexNet for evaluation. AlexNet correlates with human perception of adversarial examples (Figure \ref{fig:lpips}) and provides a standard measure of LPIPS distance; in contrast, self-bounded attacks by definition have varying bounds across evaluated models.

\begin{table}[H]
    \centering
    \caption{Accuracy of a PAT-trained ResNet-50 on ImageNet-100 against various perceptual adversarial attacks. PPGD and LPA attacks are shown self-bounded and externally-bounded with AlexNet. We also experimented with two different perceptual projection methods (see Appendix \ref{subsec:projection_details}). Bounds are $\epsilon=0.25$ for self-bounded attacks and $\epsilon=0.5$ for externally-bounded attacks, since the LPIPS distance from AlexNet tends to be about twice as high as that from ResNet-50.}
    \label{tab:attack_experiments}
    \begin{tabular}{lllr}
        \toprule
        \bf Attack & \bf LPIPS model & \bf Projection & \bf Accuracy against PAT \\
        \midrule
        PPGD & self & bisection method & 43.3 \\
        PPGD & self & Newton's method & 49.7 \\
        PPGD & AlexNet & bisection method & 45.2 \\
        PPGD & AlexNet & Newton's method & 28.7 \\
        LPA & self & bisection method & 39.6 \\
        LPA & self & Newton's method & 53.8 \\
        LPA & AlexNet & bisection method & 4.2 \\
        LPA & AlexNet & Newton's method & 2.4 \\
        \bottomrule
    \end{tabular}
\end{table}

\section{PAT Experiments}

\subsection{Ablation Study}
\label{sec:pat_ablation}

We perform an ablation study of Perceptual Adversarial Training (PAT). First, we examine Fast-LPA, the training attack. We attempt training without step size ($\eta$) decay and/or without increasing $\lambda$ during Fast-LPA, and find that PAT performs best with both $\eta$ decay and $\lambda$ increase.

Training a classifier with PAT gives robustness against a wide range of adversarial threat models (see Section \ref{sec:results}). However, it tends to give low accuracy against natural, unperturbed inputs. Thus, we use a technique from \citet{balaji_instance_2019} to improve natural accuracy in PAT-trained models: at each training step, only inputs which are classified correctly without any perturbation are attacked. In addition to increasing natural accuracy, this also improves the speed of PAT since only some inputs from each batch must be attacked. In this ablation study, we compare attacking every input with Fast-LPA during training to only attacking the natural inputs which are already classified correctly. We find that the latter method achieves higher natural accuracy at the cost of some robust accuracy.

\begin{table}[H]
    \centering
    \caption{Accuracies against various attacks for models in the PAT ablation study. Attack bounds are $8/255$ for $L_\infty$, $1$ for $L_2$, 0.5 for PPGD/LPA, and the original bounds for StAdv/ReColorAdv.}
    \label{tab:pat_ablation}
    \setlength{\tabcolsep}{4pt}
    \begin{tabular}{l|rr|rrrrr|rr}
        \toprule
        & \bf Union & \bf Unseen & \multicolumn{5}{c|}{\bf Narrow threat models} & \multicolumn{2}{c}{\bf NPTM} \\
        \textbf{Ablation} & & \multicolumn{1}{c|}{\bf mean} & Clean & $L_\infty$ & $L_2$ & StAdv & ReColor & PPGD & LPA \\
        \midrule
        None & 21.9 & 45.6 & 82.4 & 30.2 & 34.9 & 46.4 & 71.0 & 13.1 & 2.1 \\
        No $\eta$ decay & 17.2 & 48.1 & 82.7 & 37.5 & 43.3 & 39.7 & 72.1 & 10.5 & 1.1 \\
        No $\lambda$ increase & 8.1 & 42.1 & 85.1 & 33.7 & 35.9 & 27.8 & 71.1 & 11.6 & 1.0 \\
        No $\eta$ decay or $\lambda$ increase & 19.6 & 49.2 & 82.1 & 36.7 & 41.9 & 44.8 & 73.4 & 10.9 & 0.9 \\
        Attack all inputs & 26.8 & 46.6 & 74.5 & 29.8 & 33.5 & 56.6 & 66.4 & 24.5 & 6.5 \\
        \bottomrule
    \end{tabular}
\end{table}

\subsection{Projection During Training}
We choose not to add a projection step to the end of Fast-LPA during training because it slows down the attack, requiring many more passes through the network per training step. However, we tested self-bounded PAT with a projection step and found that it increased clean accuracy slightly but decreased robust accuracy significantly.
We believe this is because not projecting increases the effective bound on the training attacks, leading to better robustness. To test this, we tried training without projection using a smaller bound ($\epsilon = 0.4$ instead of $\epsilon = 0.5$) and found the results closely matched the results when using projection at the larger bound. That is, PAT with projection at $\epsilon = 0.5$ is similar to PAT without projection at $\epsilon = 0.4$. These results are shown in \ref{tab:pat_projection}.

\begin{table}[H]
    \centering
    \caption{Accuracies against various attacks for PAT-trained models on CIFAR-10, with and without a projection step during training.}
    \label{tab:pat_projection}
    \setlength{\tabcolsep}{4pt}
    \begin{tabular}{ll|rr|rrrrr|rr}
        \toprule
        & & \bf Union & \bf Unseen & \multicolumn{5}{c|}{\bf Narrow threat models} & \multicolumn{2}{c}{\bf NPTM} \\
        \bf Projection & \bf Bound ($\epsilon$) & & \multicolumn{1}{c|}{\bf mean} & Clean &  $L_\infty$ & $L_2$ & StAdv & ReColor & PPGD & LPA \\
        \midrule
        No & 0.5 & 21.9 & 45.6 & 82.4 & 30.2 & 34.9 & 46.4 & 71.0 & 13.1 & 2.1 \\
        Yes & 0.5 & 5.8 & 41.4 & 83.9 & 35.7 & 38.5 & 17.9 & 73.3 & 10.7 & 1.7 \\
        No & 0.4 & 4.9 & 36.3 & 84.1 & 31.1 & 36.3 & 10.3 & 67.4 & 10.9 & 3.2 \\
        \bottomrule
    \end{tabular}
\end{table}

\subsection{Self-Bounded vs. AlexNet-Bounded PAT}
\label{sec:cifar_pat_self_alexnet}

Performing externally-bounded PAT with AlexNet produces more robust models on CIFAR-10 than self-bounded PAT. This is not the case on ImageNet-100, where self- and AlexNet-bounded PAT perform similarly.

There is a simple explanation for this: the effective training bound on CIFAR-10 is greater for AlexNet-bounded PAT than for self-bounded PAT. To measure this, we generate adversarial examples for all the test threat models on CIFAR-10 ($L_2$, $L_\infty$, StAdv, and ReColorAdv). We find that the average LPIPS distance for all adversarial examples using AlexNet is 1.13; for a PAT-trained ResNet-50, it is 0.88. Because of this disparity, we use a lower training bound for self-bounded PAT ($\epsilon = 0.5$) than for AlexNet-bounded PAT ($\epsilon = 1$). However, this means that the average test attack has 76\% greater LPIPS distance than the training attacks for self-bounded PAT, whereas the average test attack has only 13\% greater LPIPS distance for AlexNet-bounded PAT. This explains why AlexNet-bounded PAT gives better robustness; it only has to generalize to slightly larger attacks on average.
   
We tried performing AlexNet-bounded PAT with a more comparable bound ($\epsilon = 0.7$) to self-bounded PAT. This gives the average test attack about 80\% greater LPIPS distance than the training attacks, similar to self-bounded PAT. Table \ref{tab:cifar_pat_self_alexnet} shows that the results are more similar for self-bounded and AlexNet-bounded PAT with $\epsilon = 0.7$.

\begin{table}[H]
    \centering
    \caption{Accuracies against various attacks for classifiers on CIFAR-10 trained with self- and AlexNet-bounded PAT using various bounds.}
    \label{tab:cifar_pat_self_alexnet}
    \setlength{\tabcolsep}{4pt}
    \begin{tabular}{ll|rr|rrrrr|rr}
        \toprule
        & & \bf Union & \bf Unseen & \multicolumn{5}{c|}{\bf Narrow threat models} & \multicolumn{2}{c}{\bf NPTM} \\
        \bf LPIPS model & \bf Bound ($\epsilon$) & & \multicolumn{1}{c|}{\bf mean} & Clean &  $L_\infty$ & $L_2$ & StAdv & ReColor & PPGD & LPA \\
        \midrule
        Self & 0.5 & 21.9 & 45.6 & 82.4 & 30.2 & 34.9 & 46.4 & 71.0 & 13.1 & 2.1 \\
        AlexNet & 0.7 & 16.7 & 45.3 & 80.0 & 35.5 & 41.3 & 33.2 & 71.5 & 17.8 & 4.9 \\
        AlexNet & 1.0 & 27.8 & 48.5 & 71.6 & 28.7 & 33.3 & 64.5 & 67.5 & 26.6 & 9.8 \\
        \bottomrule
    \end{tabular}
\end{table}

\subsection{Accuracy-Robustness Tradeoff}
\label{sec:accuracy_robustness_tradeoff}

\citet{tsipras_robustness_2019} have noted that there is often a tradeoff the between adversarial robustness of a classifier and its accuracy. That is, models which have higher accuracy under adversarial attack may have lower accuracy against clean images. We observe this phenomenon with adversarial training and PAT. Since PAT gives greater robustness against several narrow threat models, models trained with it tend to have lower accuracy on clean images than models trained with narrow adversarial training. In Figure \ref{fig:accuracy_robustness_tradeoff}, we show the robust and clean accuracies of several models trained on CIFAR-10 and ImageNet-100 with PAT and adversarial training. While some PAT models have lower clean accuracy than adversarially trained models, at least one PAT model on each dataset surpasses the Pareto frontier of the accuracy-robustness tradeoff for adversarial training. That is, there are PAT-trained models on both datasets with both higher robust accuracy \textit{and} higher clean accuracy than adversarial training.

\begin{figure}
    \centering
    \input{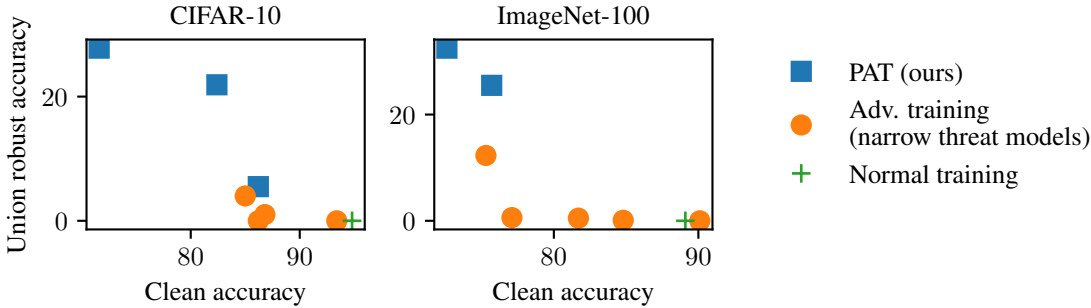}
    \caption{Several classifiers trained with PAT, adversarial training, and normal training on CIFAR-10 and ImageNet-100 are plotted with their clean accuracy and accuracy against the union of narrow threat models (see Section \ref{sec:results} for robustness evaluation methodology). PAT models on both datasets outperform adversarial trained models in both clean and robust accuracy.}
    \label{fig:accuracy_robustness_tradeoff}
\end{figure}

\subsection{Performance Against StAdv and ReColorAdv}
\label{sec:pat_stadv_recoloradv}

It was surprising to find that PAT outperformed threat-specific adversarial training (AT) against the StAdv and ReColorAdv attacks on CIFAR-10 (it does not do so on ImageNet-100). In Table \ref{tab:cifar_at} (partially reproduced in Table \ref{tab:pat_stadv_recoloradv} below), PAT-AlexNet improves robustness over AT against StAdv from 54\% to 65\%; PAT-self improves robustness over AT against ReColorAdv from 65\% to 71\%.

We conjecture that, for these threat models, this is because training against a wider set of perturbations at training time helps generalize robustness to new inputs at test time, even within the same threat model. To test this, we additionally train classifiers using adversarial training against the StAdv and ReColorAdv attacks with {\it double} the default bound. The results are shown in Table \ref{tab:pat_stadv_recoloradv} below. We find that, because these classifiers are exposed to a wider range of spatial and recoloring perturbations during training, they perform better than PAT against those attacks at test time (76\% vs 65\% for StAdv and 81\% vs 71\% for ReColorAdv). 

This suggests that PAT not only improves robustness against a wide range of adversarial threat models, it can actually improve robustness over threat-specific adversarial training by incorporating a wider range of attacks during training.

\begin{table}[H]
    \centering
    \caption{Results of our experiments training against StAdv and ReColorAdv on CIFAR-10 with double the default bounds. Columns are identical to Table \ref{tab:cifar_at}.}
    \label{tab:pat_stadv_recoloradv}
    \vspace{-6pt}
    \setlength{\tabcolsep}{4pt}
    \setlength{\aboverulesep}{0pt}
    \setlength{\belowrulesep}{0pt}
    \setlength{\extrarowheight}{.75ex}
    \begin{tabular}{l|rr|rrr>{\columncolor{LightYellow}}r>{\columncolor{LightYellow}}r|rr}
        \toprule
        & \bf Union & \bf Unseen & \multicolumn{5}{c|}{\bf Narrow threat models} & \multicolumn{2}{c}{\bf NPTM} \vspace{-0.75ex} \\
        \textbf{Training} & & \multicolumn{1}{c|}{\bf mean} & Clean & $L_\infty$ & $L_2$ & StAdv & ReColor & PPGD & LPA \\
        \midrule
        AT StAdv & 0.0 & 1.4 & 86.2 & 0.1 & 0.2 & 53.9 & 5.1 & 0.0 & 0.0 \vspace{-0.75ex} \\
        AT StAdv (double $\epsilon$) & 0.2 & 3.7 & 83.9 & 0.2 & 0.5 & 76.1 & 13.9 & 0.0 & 0.0 \vspace{-0.75ex} \\
        AT ReColorAdv & 0.0 & 3.1 & 93.4 & 8.5 & 3.9 & 0.0 & 65.0 & 0.1 & 0.0 \vspace{-0.75ex} \\
        AT ReColorAdv (double $\epsilon$) & 0.0 & 5.3 & 92.0 & 12.5 & 8.6 & 0.3 & 81.2 & 0.4 & 0.0 \\
        \midrule
        PAT-self & 21.9 & 45.6 & 82.4 & 30.2 & 34.9 & 46.4 & 71.0 & 13.1 & 2.1 \vspace{-0.75ex} \\
        PAT-AlexNet & 27.8 & 48.5 & 71.6 & 28.7 & 33.3 & 64.5 & 67.5 & 26.6 & 9.8 \\
        \bottomrule
    \end{tabular}
\end{table}

\section{Common Corruptions Evaluation}
\label{sec:common_corruptions}

We evaluate the robustness of PAT-trained models to common corruptions in addition to adversarial examples on CIFAR-10 and ImageNet-100. In particular, we test PAT-trained classifiers on CIFAR-10-C and ImageNet-100-C, where ImageNet-100-C is the 100-class subset of ImageNet-C formed by taking every tenth class \citep{hendrycks2019robustness}. These datasets are based on random corruptions of CIFAR-10 and ImageNet, respectively, using 15 perturbation types with 5 levels of severity. The perturbation types are split into four general categories: ``noise,'' ``blur,'' ``weather,'' and ``digital.''

The metric we use to evaluate PAT against common corruptions is mean relative corruption error (relative mCE). The relative corruption error is defined by \citet{hendrycks2019robustness}  for a classifier $f$ and corruption type $c$ as
\begin{equation*}
    \text{Relative CE}_c^f = \frac{\sum_{s=1}^5 E_{s,c}^f - E_\text{clean}^f}{\sum_{s=1}^5 E_{s,c}^\text{AlexNet} - E_\text{clean}^\text{AlexNet}}
\end{equation*}
where $E^f_{s,c}$ is the error of classifier $f$ against corruption type $c$ at severity level $s$, and $E^f_\text{clean}$ is the error of classifier $f$ on unperturbed inputs. The relative mCE is defined as the mean relative CE over all perturbation types.

The relative mCE for classifiers trained with normal training, adversarial training, and PAT is shown in Tables \ref{tab:cifar_common_corruptions} and \ref{tab:imagenet100_common_corruptions}. PAT gives better robustness (lower relative mCE) against common corruptions on both CIFAR-10-C and ImageNet-100-C. The only category of perturbations where $L_2$ adversarial training outperforms PAT is ``noise'' on CIFAR-10-C, which makes sense because Gaussian and other types of noise are symmetrically distributed in an $L_2$ ball. For the other perturbation types and on ImageNet-100-C, PAT outperforms $L_2$ and $L_\infty$ adversarial training, indicating that robustness against a wider range of worst-case perturbations also gives robustness against a wider range of random perturbations.

\begin{table}[H]
    \centering
    \caption{Robustness of classifiers trained with adversarial training and PAT against common corruptions in the CIFAR-10-C dataset. Results are reported as relative mCE (lower is better). PAT improves robustness against common corruptions overall and for three of the specific perturbation categories.}
    \begin{tabular}{l|rrrr|r}
        \toprule
        & \multicolumn{5}{|c}{\bf Perturbation Type} \\
        \bf Training & Noise & Blur & Weather & Digital & All \\
        \midrule
         Normal & 1.72 & 1.57 & 1.03 & 1.87 & 1.59 \\
        \midrule
        $L_\infty$ & 0.24 & 0.46 & 0.96 & 0.61 & 0.57 \\
        $L_2$ & \bf 0.16 & 0.39 & 1.02 & 0.61 & 0.54 \\
        \midrule
        PAT-self & 0.22 & \bf 0.30 & 0.90 & 0.58 & 0.50 \\
        PAT-AlexNet & 0.22 & 0.34 & \bf 0.88 & \bf 0.56 & \bf 0.49 \\
        \bottomrule
    \end{tabular}
    \label{tab:cifar_common_corruptions}
\end{table}

\begin{table}[H]
    \centering
    \caption{Robustness of classifiers trained with adversarial training and PAT against common corruptions in the ImageNet-100-C dataset. Results are reported as relative mCE (lower is better).}
    \begin{tabular}{l|rrrr|r}
        \toprule
        & \multicolumn{5}{|c}{\bf Perturbation Type} \\
        \bf Training & Noise & Blur & Weather & Digital & All \\
        \midrule
         Normal & 0.85 & 0.57 & \bf 0.56 & 0.35 & 0.55 \\
        \midrule
        $L_\infty$ & 0.54 & 0.41 & 0.62 & 0.25 & 0.42 \\
        $L_2$ & 0.41 & 0.36 & 0.71 & 0.27 & 0.41 \\
        \midrule
        PAT-self & \bf 0.39 & 0.36 & 0.60 & \bf 0.24 & \bf 0.37 \\
        PAT-AlexNet & 0.49 & \bf 0.35 & 0.61 & \bf 0.24 & 0.39 \\
        \bottomrule
    \end{tabular}
    \label{tab:imagenet100_common_corruptions}
\end{table}

\section{Experiment Details}
\label{sec:experiment_details}

For all experiments, we train ResNet-50~\citep{he_deep_2016} with SGD for 100 epochs. We use 10 attack iterations for training and 200 for testing, except for PPGD and LPA, where we use 40 for testing since they are more expensive. Self-bounded PAT takes about 12 hours to train for CIFAR-10 on an Nvidia RTX 2080 Ti GPU, and about 5 days to train for ImageNet-100 on 4 GPUs. We implement PPGD, LPA, and PAT using PyTorch \citep{paszke_automatic_2017}.

We preprocess images after adversarial perturbation, but before classification, by standardizing them based on the mean and standard deviation of each channel for all images in the dataset. We use the default data augmentation techniques from the \texttt{robustness} library~\citep{robustness}. The CIFAR-10 dataset can be obtained from \url{https://www.cs.toronto.edu/~kriz/cifar.html}. The ImageNet-100 dataset is a subset of the ImageNet Large Scale Visual Recognition Challenge (2012)~\citep{ILSVRC15} including only every tenth class by WordNet ID order. It can be obtained from \url{http://www.image-net.org/download-images}.

\begin{table}[H]
    \centering
    \caption{Hyperparameters for the adversarial training experiments on CIFAR-10 and ImageNet-100. For CIFAR-10, hyperparameters are similar to those used by \citet{zhang_theoretically_2019}. For ImageNet-100, hyperparameters are similar to those used by \citet{kang_testing_2019}.}
    \label{tab:hyperparam}
    \begin{tabular}{lrr}
        \toprule
        \bf Parameter & \bf CIFAR-10 & \bf ImageNet-100 \\
        \midrule
        Architecture & ResNet-50 & ResNet-50 \\
        Number of parameters & 23,520,842 & 23,712,932 \\
        \midrule
        Optimizer & SGD & SGD \\
        Momentum & 0.9 & 0.9 \\
        Weight decay & $2 \times 10^{-4}$ & $10^{-4}$ \\
        Batch size & 50 & 128 \\
        Training epochs & 100 & 90 \\
        Initial learning rate & 0.1 & 0.1 \\
        Learning rate drop epochs ($\times 0.1$ drop) & 75, 90 & 30, 60, 80 \\
        \midrule
        Attack iterations (train) & 10 & 10 \\
        Attack iterations (test) & 200 & 200 \\
        \bottomrule
    \end{tabular}
\end{table}

\subsection{Layers for LPIPS Calculation}
Calculating the LPIPS distance using a neural network classifier $\featclf(\cdot)$ requires choosing layers whose normalized, flattened activations $\feat(\cdot)$ should be compared between images. For AlexNet and VGG-16, we use the same layers to calculate LPIPS distance as do \citet{zhang_unreasonable_2018}. For AlexNet~\citep{krizhevsky_imagenet_2012}, we use the activations after each of the first five ReLU functions. For VGG-16~\citep{simonyan_very_2014}, we use the activations directly before the five max pooling layers. In ResNet-50, we use the outputs of the conv2\_x, conv3\_x, conv4\_x, and conv5\_x layers, as listed in Table 1 of \citet{he_deep_2016}.

\end{document}